\pdfoutput=1

\documentclass[twoside]{article}

%
\usepackage[accepted]{aistats2024}
%




\usepackage{bm}
\usepackage{xcolor}
\usepackage{amsmath}
\usepackage{amsthm}
\usepackage{amsfonts}       
\usepackage{algorithm}
\usepackage{algorithmic}
\usepackage{subcaption}
\usepackage{graphicx}
\graphicspath{ {./img/} }

\usepackage{Sty/mcr}

\newtheorem{lemma}{Lemma}

\newtheorem{example}{Example}

\DeclareMathOperator*{\argmin}{arg\,min}

\begin{document}

%
\runningtitle{Statistical Inference for the DTW Distance, with Application to Abnormal Time-Series Detection}

%

\twocolumn[

\aistatstitle{Statistical Inference for the Dynamic Time Warping Distance, \\ with Application to Abnormal Time-Series Detection}

\aistatsauthor{ Vo Nguyen Le Duy \And Ichiro Takeuchi }

\aistatsaddress{ RIKEN \And  Nagoya University/RIKEN} ]

\begin{abstract}
We study statistical inference on the similarity/distance between two time-series under uncertain environment by considering a hypothesis test on the distance obtained from Dynamic Time Warping (DTW) algorithm.
The sampling distribution of the DTW distance is difficult to derive because it is obtained based on the solution of the DTW algorithm, which is complicated.
To circumvent this difficulty, we propose to employ the \emph{conditional selective inference} framework, which enables us to derive a \emph{valid} inference method on the DTW distance.
%
%
To our knowledge, this is the first method that can provide a valid $p$-value to quantify the statistical significance of the DTW distance, which is helpful for high-stake decision making such as abnormal time-series detection problems.
%

\end{abstract}

\section{Introduction} \label{sec:intro}

Abnormal time-series detection is a crucial task in various fields. 
A fundamental method 
is to compare a new query time-series to a reference (normal) time-series. 
To do this, it is often necessary to align the two time-series and then measure the distance between them.
If the distance exceeds a pre-determined threshold, the query time-series is considered abnormal.
Aligning two time-series involves computing the optimal pairwise correspondence between their elements while preserving the alignment orderings.
The Dynamic Time Warping (DTW) \cite{sakoe1978dynamic} is a standard algorithm for finding the optimal alignment between the two 
time-series.

In order to control the balance between two types of errors in abnormality detection, i.e., false positives (errors in which normal time series are incorrectly identified as abnormal) and false negatives (errors in which abnormal time series are falsely determined as normal), it is necessary to consider the statistical reliability of the DTW distance.
Our goal is to develop a statistical inference for the DTW distance, in the form of $p$-value or confidence interval, to control the false positive rate (FPR).
In other words, if we repeat the abnormal time-series detections many times, the probability of obtaining incorrect abnormal time-series can be controlled under a significance level $\alpha$ (e.g., 0.05).

However, this task is challenging because the sampling distribution of the DTW distance is too complex to derive.
%
Our key idea to circumvent this difficulty is to employ the \emph{conditional Selective Inference (SI)} literature~\cite{lee2016exact}.
The basic concept of conditional SI is to make an inference conditional on a \emph{selection event}.
In this paper, we interpret the optimization problem of selecting the optimal alignment between the two time-series as the selection event.
By conditioning on the optimal alignment, the sampling distribution of the DTW distance can be derived which is subsequently used to conduct the statistical inference. 
We would like to note that we do \emph{not} introduce a new anomaly detection method in this study.
Instead, we introduce a novel post-inference method on the results obtained after the abnormal time-series detection is performed.

\begin{example}
To see the importance of the proposed method, we consider the results in Table \ref{tbl:example_intro}. 
We generated a query time-series and a reference time-series that were both normal.
%
Then, we calculated the DTW distance and conducted abnormal time-series detection.
We compared our method with three other methods: no inference (comparing the distance with the threshold without inference), naive statistical inference 
and data splitting.
The experiment was repeated $N$ times and the FPR results are shown in Table \ref{tbl:example_intro}.
With the proposed method, we were able to control the FPR under $\alpha$ = 0.05, which the competitors were unable to achieve.
In the proposed method, even if the threshold is arbitrarily determined, it is possible to adjust it in a way that ensures the FPR is smaller than 
$\alpha$.
\end{example}

\begin{table}[!t]
\renewcommand{\arraystretch}{1.2}
\centering
\caption{The importance of the proposed method lies in its ability to control the FPR (type-I error rate). When statistical inference was either not performed or conducted improperly, we failed to control the FPR. However, with the proposed method, a \emph{valid} statistical inference was performed, leading to successful control of the FPR at a significance level $\alpha$ = 0.05.}
\vspace{3pt}
\begin{tabular}{ |l|c|c| } 
  \hline
  & $N = 1200$ & $N = 2400$ \\
  \hline
   \textbf{No Inference} & FPR = 0.87 & FPR = 0.76  \\
   \hline
   \textbf{Naive Inference} & 0.80 & 0.78  \\
   \hline
   \textbf{Data Splitting} & 0.12 & 0.11  \\
   \hline
   \textbf{Proposed Method} & \textbf{0.04} & \textbf{0.05}  \\
  \hline
\end{tabular}
\label{tbl:example_intro}
\vspace{-12pt}
\end{table}

\textbf{Contribution.}
The main contributions in this study are two-fold.
The first contribution is that we derive a conditional sampling distribution of the DTW distance in a tractable form inspired by the conditional SI literature. 
This task can be done by conditioning on the optimal alignment between the two time-series.
%
%
%
The second contribution is to develop a computational method to compute the conditional sampling distribution by introducing a non-trivial technique called \emph{parametric DTW method}. 
These two contributions enable us to detect abnormal query time-series with valid statistical significance measures such as $p$-values or confidence intervals. 
To our knowledge, this is the first valid statistical test for the DTW distance, which is essential for controlling the risk of high-stakes decision making in signal processing.
%
%
%

\textbf{Related work.}
Anomaly detection in time series is a problem in which the goal is to identify anomalous points \emph{within} the time-series that can indicate potential anomalies.
There is a vast body of literature on methods for this problem, and a reference can be found in \cite{aggarwal2017outlier}.
However, this paper focuses on a different problem: abnormal time-series detection, in which the goal is to identify if the \emph{entire} query time-series is abnormal.
The fundamental approach in the latter problem involves computing the distance between a new query time-series and a reference time-series, and 
comparing the resulting distance against a 
given threshold to determine if the new query time-series is abnormal.

The DTW distance is commonly used for quantifying the similarity/distance between two time-series~\cite{sakoe1978dynamic, keogh2001derivative, muller2007dynamic, cuturi2017soft}.
However, due to the complex discrete nature of the DTW algorithm, it is difficult to quantify the uncertainty of the DTW distance.
Therefore, to our knowledge, there are neither valid methods nor asymptotic approximation methods for the statistical inference on the DTW distance.
Due to the lack of valid statistical inference method, when decision making is conducted based on DTW distance, it is difficult to properly control the risk of the incorrect decision. 

In recent years, conditional SI has emerged as a promising approach for evaluating the statistical reliability of data-driven hypotheses. 
It was first introduced as a statistical inference tool for the features selected by Lasso \cite{lee2016exact}.
The concept behind conditional SI is to make inference based on the sampling distribution of the test statistic conditional on a selection event.
This approach allows us to derive the exact sampling distribution of the test statistic.
Conditional SI has also been applied to various problems \cite{loftus2015selective, choi2017selecting, tian2018selective, yang2016selective, tibshirani2016exact, fithian2014optimal, loftus2014significance, panigrahi2016bayesian,  sugiyama2021more, hyun2018post, duy2021more, le2021parametric, duy2021exact, sugiyama2021valid, chen2019valid, tsukurimichi2021conditional, tanizaki2020computing, duy2020computing, duy2020quantifying}. 
%
However, no study to date can utilize conditional SI to provide a statistical inference on the DTW distance.
%

The most closely related work (and the motivation for this study) is \cite{duy2021exact}, where the authors introduce SI for computing a confidence interval for the Wasserstein distance (WD). 
The idea of \cite{duy2021exact} is to consider the distribution of the WD conditional on the transportation plan.
%
Their method relies on the fact that the WD is defined as the solution of a linear program (LP), and specific properties of an LP can be utilized to achieve the goal.
However, it is not the case of the DTW distance because it is defined as the solution of a combinatorial optimization solved by Dynamic Programming, which is more complicated. 
Therefore, the method in \cite{duy2021exact} is \emph{not} applicable in the case of the DTW distance.

\section{Problem Statement} \label{sec:problem_statement}
Let us consider a query time-series $\bm X$ and a normal reference time-series $\bm Y$ represented as vectors corrupted with Gaussian noise and denote them as
\begin{subequations}
\begin{align}
	\hspace{-3pt} \bm X &= (x_1, ..., x_n)^\top = \bm \mu_{\bm X}  + \bm \veps_{\bm X},  ~ \bm \veps_{\bm X} \sim \NN(\bm 0, \Sigma_{\bm X}), \label{eq:random_X} \\ 
	\hspace{-3pt} \bm Y &= (y_1, ..., y_m)^\top = \bm \mu_{\bm Y}  + \bm \veps_{\bm Y},  ~ \bm \veps_{\bm Y} \sim \NN(\bm 0, \Sigma_{\bm Y}) \label{eq:random_Y},
\end{align}
\end{subequations}
where $n$ and $m$ are the lengths of time-series,
$\bm \mu_{\bm X}$ and $\bm \mu_{\bm Y}$ are the signal vectors, 
$\bm \veps_{\bm X}$ and $\bm \veps_{\bm Y}$ are Gaussian noise vectors with covariances matrices $\Sigma_{\bm X}$ and $\Sigma_{\bm Y}$ are known or estimable from independent data.



\subsection{Optimal Alignment and the DTW}
Let us denote the cost matrix of pairwise distances between the elements of $\bm X$ and $\bm Y$ as 
\begin{align} \label{eq:cost_matrix}
	C(\bm X, \bm Y) 
	& = \big[(x_i - y_j)^2 \big]_{ij} \in \RR^{n \times m}.
\end{align}
The \emph{optimal alignment matrix} between $\bm X$ and $\bm Y$ is 
\begin{align} \label{eq:optimal_alignment}
		\hat{M} = \argmin \limits_{M \in \cM_{n, m}} \big \langle M, C(\bm X, \bm Y) \big \rangle,
\end{align}
where $\cM_{n, m} \subset \{0, 1\}^{n \times m}$ is a set of (binary) alignment matrices that satisfy the monotonicity, continuity, and matching endpoints constraints,
and $\langle \cdot, \cdot \rangle$ is the Frobenius inner product.
The cardinality of $\cM_{n, m}$ is known as the ${\rm delannoy}(n - 1, m - 1)$ 
which is the number of paths on a rectangular grid from (0, 0) to ($n - 1$, $m - 1$) using only single steps to south, southeast, or east direction.
A naive way to solve \eq{eq:optimal_alignment} is to enumerate all possible candidates in $\cM_{n, m}$ and obtain $\hat{M}$.
However, it is computationally impractical because the size of the set $\cM_{n, m}$ is exponentially increasing with $n$ and $m$.
The DTW is well-known as an efficient dynamic programming algorithm to obtain the solution $\hat{M}$ in \eq{eq:optimal_alignment} by using \emph{Bellman recursion}.

\subsection{Closed-form of the DTW Distance}

After obtaining the optimal alignment matrix $\hat{M}$, the DTW distance is written in a closed form as
	\begin{align*} 
		\hat{L}(\bm X, \bm Y) 
		& = \left \langle \hat{M}, C(\bm X, \bm Y) \right \rangle  
		= \hat{M}_{\rm vec}^\top C_{\rm vec}(\bm X, \bm Y), 
	\end{align*}
	where $\hat{M}_{\rm vec} = {\rm vec} (\hat{M}) \in \RR^{n m}$,
	%
	\begin{align*}
	& C_{\rm vec} (\bm X, \bm Y) 
	=
	\left [ \Omega  {\bm X \choose \bm Y} \right ]
	\circ
	\left [ \Omega  {\bm X \choose \bm Y} \right ] \in \RR^{nm}, 
	\\ 
	&
	\Omega = {\rm hstack}\left ( 
	I_{n} \otimes \bm 1_{m}, - \bm 1_{n} \otimes I_{m} \right ) 
\in \RR^{n m \times (n + m)},
	%
	%
	\end{align*}
${\rm vec}(\cdot)$ is an operator that transforms a matrix into a vector with concatenated rows, the operator $\circ$ is element-wise product, $\rm hstack(\cdot, \cdot)$ is horizontal stack operation, $I_n \in \RR^{n \times n}$ is the identity matrix, and $\bm 1_m \in \RR^m$ is a vector of ones.
For mathematical tractability, we consider a slightly modified version of the DTW distance defined as 
\begin{align} \label{eq:distance_closed_form}
    		\hat{L}^\prime (\bm X, \bm Y)
    		= 
    		\hat{M}_{\rm vec}^\top ~ {\rm abs}
    		\left (
    		\Omega
    		{\bm X \choose \bm Y}
    		\right ),
\end{align}
where ${\rm abs} (\cdot)$ denotes the element-wise absolute operation.
Examples of vector $C_{\rm vec}(\bm X, \bm Y)$, matrix $\Omega$ and vector $\hat{M}_{\rm vec}$ are provided in Appendix \ref{appendix:examples}.


\subsection{Statistical Inference}

Our goal is to test if the DTW distance between the query signal $\bm \mu_{\bm X}$ and the reference signal $\bm \mu_{\bm Y}$ is smaller or greater than a threshold.

\textbf{Null and alternative hypotheses.}
Let $\tau > 0 $ be the threshold. 
The test for abnormal time-series detection is formulated by considering following hypotheses:
\begin{align*}
		{\rm H}_0: \hat{L}^\prime (\bm \mu_{\bm X}, \bm \mu_{\bm Y}) \leq \tau 
		\quad 
		\text{vs.}
		\quad 
		{\rm H}_1: \hat{L}^\prime (\bm \mu_{\bm X}, \bm \mu_{\bm Y}) > \tau.
\end{align*}

\textbf{Test statistic.} By replacing $(\bm \mu_{\bm X}, \bm \mu_{\bm Y})$ with $(\bm X, \bm Y)$, the test statistic $T$  is defined as follows:
\begin{align} \label{eq:test_statistic_first}
	\hspace{-2pt}T
	& =
	 \hat{L}^\prime (\bm X, \bm Y) \nonumber\\ 
	&= 
	\hat{M}_{\rm vec}^\top
	~ {\rm abs}
	\left (
	 \Omega  {\bm X \choose \bm Y}
	\right ) =
	~
	\hat{M}_{\rm vec}^\top
	 {\rm diag} (\hat{\bm s})\Omega {\bm X \choose \bm Y},
\end{align}
where 
$
	\hat{\bm s} = {\rm sign} 
	\left ( 
	\hat{M}_{\rm vec} 
	\circ 
	\left [ \Omega  {\bm X \choose \bm Y} \right ]
	\right ) \in \RR^{nm}
$,
${\rm sign}(\cdot)$ is the operator that
returns an element-wise indication of the sign of a number (${\rm sign} (0) = 0$),
and ${\rm diag}(\hat{\bm s})$ is the diagonal matrix whose diagonal entries are the elements of the vector $\hat{\bm s}$.
For notational simplicity, we re-write the test statistic as 
\begin{align} \label{eq:test_statistic_final}
	T = \bm \eta_{\hat{M}, \hat{\bm s}}^\top \big (\bm X^\top ~ \bm Y^\top \big)^\top, 
\end{align}
where 
$
\bm \eta_{\hat{M}, \hat{\bm s}}  = \left(
	\hat{M}_{\rm vec}^\top {\rm diag}(\hat{\bm s}) \Omega
	 \right)^\top \in \RR^{n + m}
$
is the direction of the test statistic.


\textbf{Challenge of characterizing the distribution of $T$.}
For statistical inference on the DTW distance, we need to characterize the sampling distribution of the test statistic $T$ in \eq{eq:test_statistic_final}.
However, since $\bm \eta_{\hat{M}, \hat{\bm s}}$ depends on $\hat{M}$ and $\hat{\bm s}$ which are defined based on the data, characterization of the exact sampling distribution of the test statistic is intrinsically difficult.
In the next section, we introduce a novel approach to resolve the aforementioned challenge inspired by the concept of conditional SI and propose a valid \emph{selective $p$-value} to conduct valid statistical inference on the DTW distance.

\section{SI for the DTW Distance} \label{sec:conditional_si_dtw_distance}

In this section, we present our first contribution.
To conduct statistical inference on the DTW distance, we employ the conditional SI framework.
Our idea is that,
%
by conditioning on the optimal alignment matrix $\hat{M}$ and its sign $\hat{\bm s}$, we can derive the conditional sampling distribution of the test statistic.

\subsection{Conditional Distribution and $p$-value} \label{subsec:conditional_distribution_selective_p_value}

We consider the following conditional sampling distribution of the test statistic:
\begin{align} \label{eq:conditional_inference}
	\hspace{-6pt}
	\bm \eta_{\hat{M}, \hat{\bm s}}^\top {\bm X \choose \bm Y} \mid 
	\left \{ 
		\cA(\bm X, \bm Y) = \hat{M}^{\rm obs},
		\cS(\bm X, \bm Y) = \hat{\bm s}^{\rm obs}
	\right \}
\end{align}
%
where we denote
$
\cA: (\bm X, \bm Y) \rightarrow \hat{M},  ~
  \cS: (\bm X, \bm Y) \rightarrow \hat{\bm s},
$
\begin{align*}
 & \hat{M}^{\rm obs} = \cA(\bm X^{\rm obs}, \bm Y^{\rm obs}), \quad 
 \hat{\bm s}^{\rm obs} = \cS(\bm X^{\rm obs}, \bm Y^{\rm obs}). \nonumber
\end{align*}
%
%
%
%
%
%
Next, we introduce the selective $p$-value defined as:
\begin{align} \label{eq:selective_p}
	p_{\rm sel}  = 
	\bP_{\rm H_0} 
	\left (
	\bm \eta_{\hat{M}, \hat{\bm s}}^\top 
	{\bm X \choose \bm Y} 
	\geq 
	\bm \eta_{\hat{M}, \hat{\bm s}}^\top 
	{\bm X^{\rm obs} \choose \bm Y^{\rm obs}}
	~ \Big | ~
	\cE
	\right ),
\end{align}
where 
$\cE = 
\left \{ 
	\begin{array}{l}
	\cA(\bm X, \bm Y) = \hat{M}^{\rm obs}, 
	\cS(\bm X, \bm Y) = \hat{\bm s}^{\rm obs}, \\ 
	\cQ (\bm X, \bm Y) = \hat{\bm q}^{\rm obs}
	\end{array}  
\right \} 
$.

The $\cQ : (\bm X, \bm Y) \rightarrow \hat{\bm q}$ is the nuisance component: 
\begin{align} \label{eq:q_and_b}
	\cQ (\bm X, \bm Y) = 
	\left ( 
	I_{n+m} - 
	\bm b
	\bm \eta_{\hat{M}, \hat{\bm s}}^\top \right ) 
	(\bm X^\top ~ \bm Y^\top)^\top,
\end{align}
where 
$
	\bm b = \frac{\Sigma \bm \eta_{\hat{M}, \hat{\bm s}}}
	{\bm \eta_{\hat{M}, \hat{\bm s}}^\top \Sigma \bm \eta_{\hat{M}, \hat{\bm s}}}
$
and 
$
\Sigma = 
\begin{pmatrix}
	\Sigma_{\bm X} & 0 \\ 
	0 & \Sigma_{\bm Y}
\end{pmatrix}.
$
%

\begin{lemma} \label{lemma:valid_selective_p}
The selective $p$-value proposed in \eq{eq:selective_p} satisfies the property of a valid $p$-value:
\begin{align*}
	\mathbb{P}_{\rm H_0}  \Big (
	p_{\rm sel} \leq \alpha
	\Big) \leq \alpha, ~~ \forall \alpha \in [0, 1].
\end{align*} 
\end{lemma}

\begin{proof}
The proof is deferred to Appendix \ref{appx:proof_valid_selective_p}.
\end{proof}

Lemma \ref{lemma:valid_selective_p} indicates that the probability of obtaining a false positive is controlled under a certain level of guarantee $\alpha$. We can also compute the selective confidence interval for the DTW distance. 
The details are provided in Appendix \ref{appendix:selective_ci}.
To compute the selective $p$-value in \eq{eq:selective_p} as well as the selective confidence interval, we need to identify the conditional data space whose characterization will be introduced in the next section.

\subsection{Conditional Data Space Characterization} 

We define the set of $(\bm X^\top ~ \bm Y^\top)^\top \in \RR^{n + m}$ that satisfies the conditions in \eq{eq:selective_p} as 
\begin{align} \label{eq:conditional_data_space}
\cD = 
\left \{ 
	{\bm X \choose \bm Y} \in \RR^{n + m}
	~ \Bigg | ~
	\begin{array}{l}
	\cA(\bm X, \bm Y) = \hat{M}^{\rm obs}, \\ 
	\cS(\bm X, \bm Y) = \hat{\bm s}^{\rm obs}, \\ 
	\cQ(\bm X, \bm Y) = \hat{\bm q}^{\rm obs}
	\end{array}  
\right \}. 
\end{align}
According to the third condition, 
the data in $\cD$ is restricted to a line as stated in the following lemma.
\begin{lemma} \label{lemma:data_line}
The set $\cD$ in \eq{eq:conditional_data_space} can be rewritten using a scalar parameter $z \in \RR$ as follows:
\begin{align} \label{eq:conditional_data_space_line}
	\cD = \Big \{ (\bm X^\top ~ \bm Y^\top)^\top = \bm a + \bm b z \mid z \in \cZ \Big \},
\end{align}
where $\bm a = \cQ(\bm X^{\rm obs}, \bm Y^{\rm obs})$, $\bm b$ is defined in \eq{eq:q_and_b}, and
\begin{align} \label{eq:cZ}
	\cZ = \left \{ 
	z \in \RR ~
	~ \Big | ~
	\begin{array}{l}
	\cA(\bm a + \bm b z) = \hat{M}^{\rm obs}, \\ 
	\cS(\bm a + \bm b z) = \hat{\bm s}^{\rm obs}
	\end{array}
	\right \}.
\end{align}
Here, with a slight abuse of notation, 
$
\cA(\bm a + \bm b z) = \cA \left ((\bm X^\top ~ \bm Y^\top)^\top \right)
$
is equivalent to $\cA(\bm X, \bm Y)$.
This similarly applies to $\cS(\bm a + \bm b z)$.
\end{lemma}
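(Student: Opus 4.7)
The plan is to reduce the three-condition set $\cD$ to a one-dimensional affine subspace by exploiting the decomposition that is standard in the conditional SI literature. Throughout, write $\bm w = (\bm X, \bm Y)^\top$, fix $\bm \eta := \bm \eta_{\hat{M}^{\rm obs}, \hat{\bm s}^{\rm obs}}$, and recall from \eqref{eq:q_and_b} that $\bm a = \hat{\bm q}^{\rm obs}$ and $\bm b = \Sigma \bm \eta / (\bm \eta^\top \Sigma \bm \eta)$.

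First I would record the two elementary identities
\begin{align*}
  \bm \eta^\top \bm b
  = \frac{\bm \eta^\top \Sigma \bm \eta}{\bm \eta^\top \Sigma \bm \eta} = 1,
  \qquad
  \bm \eta^\top \bm a
  = \bm \eta^\top (I_{n+m} - \bm b \bm \eta^\top) {\bm X^{\rm obs} \choose \bm Y^{\rm obs}} = 0,
\end{align*}
where the second follows by substituting the first. Next I would unpack the nuisance condition $\cQ(\bm w) = \bm a$: setting $z := \bm \eta^\top \bm w$, the trivial decomposition $\bm w = (I_{n+m} - \bm b \bm \eta^\top)\bm w + \bm b(\bm \eta^\top \bm w) = \cQ(\bm w) + \bm b z$ shows that $\cQ(\bm w) = \bm a$ is equivalent to $\bm w = \bm a + \bm b z$ for some scalar $z \in \RR$. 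This alone collapses $\cD$ from the ambient space to a line in $\RR^{n+m}$.

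It then remains to track the conditions on $\cA$ and $\cS$. For the forward inclusion, any $\bm w \in \cD$ is of the form $\bm a + \bm b z$, and the remaining conditions $\cA(\bm w) = \hat{M}^{\rm obs}$, $\cS(\bm w) = \hat{\bm s}^{\rm obs}$ are exactly the defining conditions of $\cZ$ in \eqref{eq:cZ}. For the reverse inclusion, if $\bm w = \bm a + \bm b z$ with $z \in \cZ$, a direct expansion gives $\cQ(\bm w) = \bm a - \bm b(\bm \eta^\top \bm a) + (1 - \bm \eta^\top \bm b)\bm b z = \bm a$ using the two identities above, while $\cA(\bm w)$ and $\cS(\bm w)$ match by the definition of $\cZ$. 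Equating the two descriptions yields \eqref{eq:conditional_data_space_line}.

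There is no deep obstacle here: the lemma is a linear-algebraic bookkeeping statement turning the $(n+m-1)$ linear constraints imposed by $\cQ(\bm w) = \bm a$ into a one-parameter affine family. The only point requiring some care is that $\bm \eta$ is itself data-dependent through $(\hat{M}, \hat{\bm s})$; conditioning on $\hat{M} = \hat{M}^{\rm obs}$ and $\hat{\bm s} = \hat{\bm s}^{\rm obs}$ pins this direction down, so $\bm \eta$, $\bm a$ and $\bm b$ may be treated as constants throughout the argument. The genuinely hard work, deferred beyond this lemma, is the subsequent characterization of the set $\cZ \subseteq \RR$ via the parametric DTW method announced in the introduction.
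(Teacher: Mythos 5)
Your proof is correct and follows essentially the same route as the paper: the condition $\cQ(\bm X, \bm Y) = \hat{\bm q}^{\rm obs}$ collapses the data to the line $\bm a + \bm b z$ with $z = \bm \eta^\top (\bm X ~ \bm Y)^\top$, and the remaining two conditions define $\cZ$. The only difference is that you make explicit the identities $\bm \eta^\top \bm b = 1$ and $\bm \eta^\top \bm a = 0$ and check both inclusions, which the paper leaves implicit in its chain of equivalences.
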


\begin{proof}
The proof is deferred to Appendix \ref{appendix:proof_lemma_data_line}.
\end{proof}

Lemma \ref{lemma:data_line} indicates that we need NOT consider the $(n + m)$-dimensional space.
Instead, we need only consider the \emph{one-dimensional projected} space $\cZ$ in \eq{eq:cZ}.

\paragraph{Reformulation of selective $p$-value and identification of the truncation region $\cZ$.}
Let us consider a random variable and its observation:
\begin{align*}
	Z = \bm \eta_{\hat{M}, \hat{\bm s}}^\top {\bm X \choose \bm Y } \in \RR 
	~~ \text{and} ~~ 
	Z^{\rm obs} = \bm \eta_{\hat{M}, \hat{\bm s}}^\top {\bm X^{\rm obs} \choose \bm Y^{\rm obs} } \in \RR,
\end{align*}
%
the selective $p$-value in (\ref{eq:selective_p}) can be rewritten as 
\begin{align} \label{eq:selective_p_parametrized}
	p_{\rm sel} 
	= \mathbb{P}_{\rm H_0} \left ( Z \geq Z^{\rm obs}
	\mid 
	Z \in \cZ
	\right).
\end{align}
Because $Z \sim \NN \big (0, \bm \eta_{\hat{M}, \hat{\bm s}}^\top \Sigma \bm \eta_{\hat{M}, \hat{\bm s}} \big )$ under the null hypothesis, $Z \mid Z \in \cZ$ follows a \emph{truncated} normal distribution. 
Once 
$\cZ$ is identified, computing the $p_{\rm sel}$ in (\ref{eq:selective_p_parametrized}) is straightforward.
Therefore, the remaining task is to identify the truncation region $\cZ$ in \eq{eq:cZ}, which can be decomposed into two sets as 
$\cZ = \cZ_1 \cap \cZ_2$,  where 
\begin{align}
	\cZ_1 & = \{ z \in \RR \mid  \cA(\bm a + \bm b z) = \hat{M}^{\rm obs}\} \label{eq:cZ_1}\\  
	~ \text{ and } ~
	\cZ_2 & = \{ z \in \RR \mid \cS(\bm a + \bm b z) = \hat{\bm s}^{\rm {obs}} \}. \label{eq:cZ_2}
\end{align}
The constructions of $\cZ_1$ and $\cZ_2$ are presented in \S \ref{sec:computational_method}.


\paragraph{Connections to conditional SI literature.} 
The proposed method draws from the ideas of the conditional SI literature and the connections are as follows:


$\bullet$ Conditioning on $\hat{M}^{\rm obs}$ and the signs $\hat{\bm s}^{\rm obs}$ in \eq{eq:conditional_inference} corresponds to conditioning on the selected features and their signs in \cite{lee2016exact} as well as conditioning on the transportation plan and their signs in \cite{duy2021exact} (see Fig. \ref{fig:connection_si_lasso_wasserstein_dtw}).

$\bullet$ The $\cQ(\bm X, \bm Y)$ in \eq{eq:q_and_b} corresponds to the component $\bm z$ in \cite{lee2016exact} (see Sec. 5, Eq. 5.2 and Theorem 5.2).
Additional conditioning on $\cQ(\bm X, \bm Y)$ is a standard approach in the conditional SI literature.

$\bullet$ The fact of restricting the data to the line in Lemma \ref{lemma:data_line} has been already implicitly exploited in \cite{lee2016exact}, but explicitly discussed in Sec. 6 of \cite{liu2018more}.

\begin{figure}[!t]
\centering
\includegraphics[width=\linewidth]{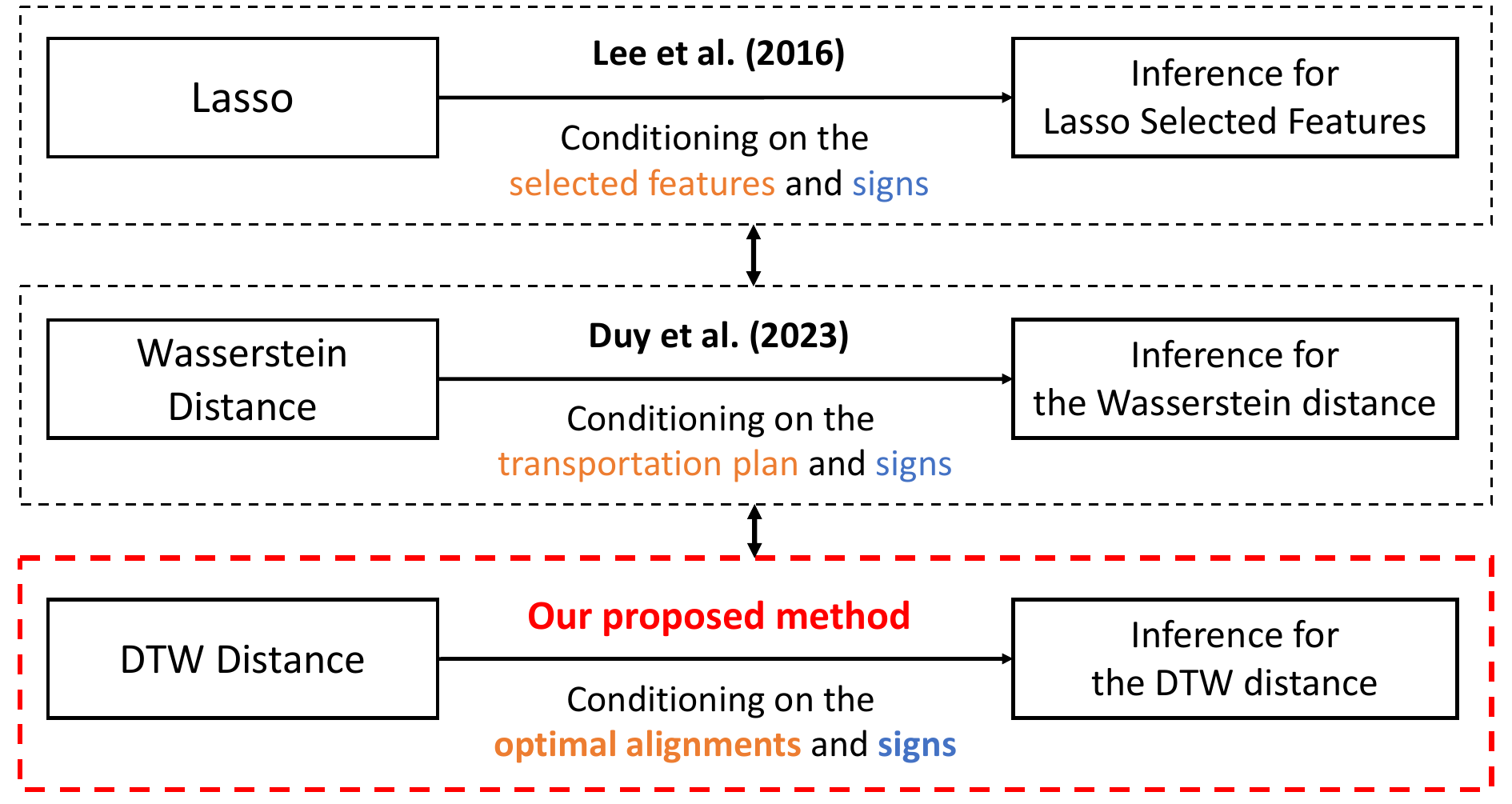}  
\caption{
The connection between the proposed method, the seminal conditional SI study \cite{lee2016exact}, and \cite{duy2021exact}.
}
\label{fig:connection_si_lasso_wasserstein_dtw}
\end{figure}

\section{Computation of $\cZ$} \label{sec:computational_method}

In this section, we present our second contribution of  introducing a novel computational method, called \emph{parametric DTW}, to compute the truncation region $\cZ$.
The basic idea is illustrated in Fig. \ref{fig:sec_4_Z}.

\begin{figure}[!t]
\centering
\includegraphics[width=\linewidth]{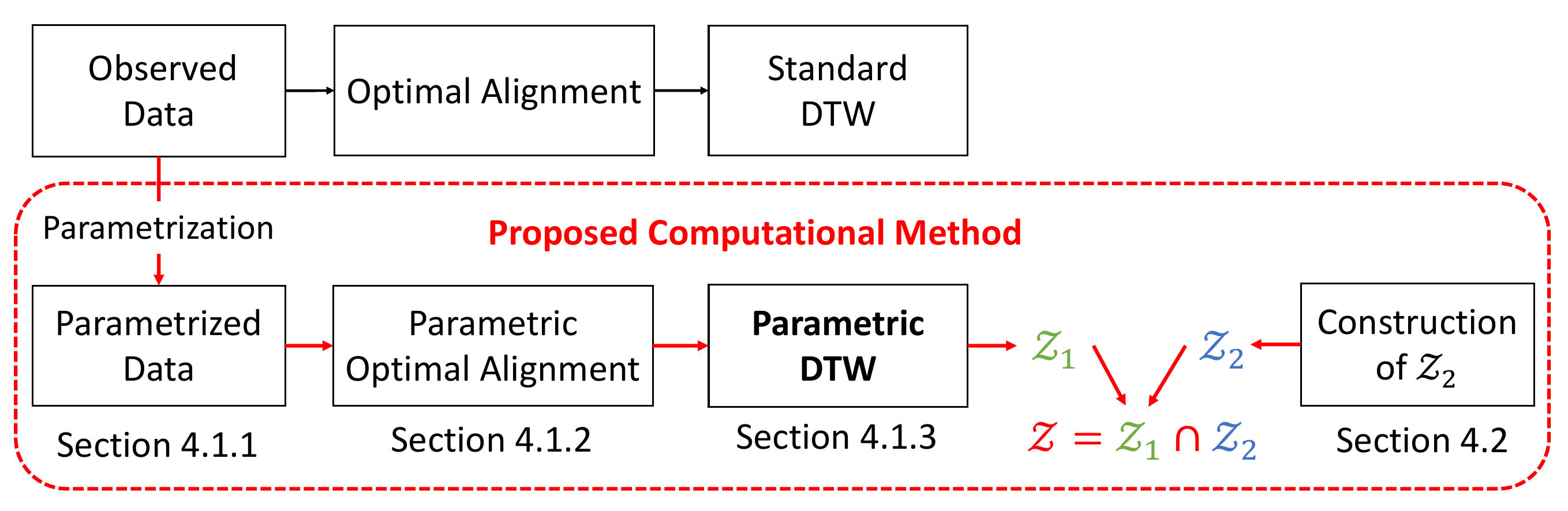}  
\caption{
Schematic illustration of computing $\cZ$.
}
\label{fig:sec_4_Z}
\end{figure}

\subsection{Construction of $\cZ_1$ in \eq{eq:cZ_1}}

\subsubsection{Parametrization of time-series data}

\paragraph{Important notations.}

Before discussing the construction of $\cZ_1$, we introduce some notations.
As mentioned in Lemma \ref{lemma:data_line}, we focus on a set of data $(\bm X^\top ~ \bm Y^\top)^\top = \bm a + \bm b z \in \RR^{n + m}$. 
We denote 
\begin{align} \label{eq:Z_z_Y_z}
	\hspace{-5pt}
	\bm X (z) = \bm a^{(1)} + \bm b^{(1)} z
	\quad 
	\text{and}
	\quad
	\bm Y (z) = \bm a^{(2)} + \bm b^{(2)} z,
\end{align}
where 
$ \bm a^{(1)} = \bm a_{1:n} \sqsubseteq \bm a$ is a sub-sequence of $\bm a \in \RR^{n + m}$ from positions 1 to $n$, 
\begin{align*} 
\bm b^{(1)} = \bm b_{1:n}, 
\quad \bm a^{(2)} = \bm a_{n+1:n + m}, 
\quad \bm b^{(2)} = \bm b_{n+1:n + m}.
\end{align*} 
Then, the parametrized cost matrix 
is defined as 
\begin{footnotesize}
\begin{align*} 
	C\Big (\bm X (z), \bm Y (z) \Big ) 
	& = \left[
	\Big (
	\left (
	 a^{(1)}_i + 
	 b^{(1)}_i z 
	\right)
	-
	\left(
	 a^{(2)}_j +
	 b^{(2)}_j z
	\right )
	\Big)^2
	\right]_{ij}. 
\end{align*}
\end{footnotesize}
\hspace{-2pt}Given $M \in \cM_{n, m}$, $\bm X(z) \in \RR^n$ and $\bm Y(z) \in \RR^m$, the loss function for the optimal alignment problem is a \emph{quadratic function (QF)}  w.r.t. $z$ and it is written as 
\begin{align} \label{eq:cost_parametrized}
	L_{n, m} \big (M, z \big ) 
	&= 
	\Big \langle
		M, C\big (\bm X (z), \bm Y (z) \big )
	\Big \rangle \nonumber \\ 
	& = \omega_0 + \omega_1 z + \omega_2  z^2,
\end{align}
where $\omega_0, \omega_1, \omega_2 \in \RR$ and they are defined as 
\begin{footnotesize}
\begin{align*}
	\hspace{-1pt}
	\omega_0 &= \sum_{i, j} M_{ij} \left (a^{(1)}_i - a^{(2)}_j\right)^2,  ~
	\omega_2 = \sum_{i, j} M_{ij} \left (b^{(1)}_i - b^{(2)}_j\right)^2, \\ 
	\hspace{-1pt} 
	\omega_1 & = 2 \sum_{i, j} M_{ij} \left (a^{(1)}_i - a^{(2)}_j\right) \left (b^{(1)}_i - b^{(2)}_j\right).
\end{align*}
\end{footnotesize}
\hspace{-5pt}The optimal alignment in \eq{eq:optimal_alignment} and the DTW distance on parametrized data  $\big ( \bm X(z), \bm Y(z) \big )$ is defined as
\begin{align} 
	\hat{M}_{n, m} (z) & = \argmin \limits_{M \in \cM_{n, m}} L_{n, m} \big (M, z \big ),  \label{eq:optimal_alignment_parametrized} \\ 
	\hat{L}_{n, m}(z) & = \min \limits_{M \in \cM_{n, m}} L_{n, m} \big (M, z \big ).  \label{eq:optimal_cost_parametrized}
\end{align}

\textbf{Construction of $\cZ_1$.} The $\cZ_1$ 
can be re-written as 
\begin{align*}
	\cZ_1 & = \left \{ z \in \RR \mid  \cA\big (\bm X(z), \bm Y(z) \big ) = \hat{M}^{\rm obs} \right \} \\ 
	& = \left \{ z \in \RR \mid  \hat{M}_{n, m}(z) = \hat{M}^{\rm obs}\right \}.
\end{align*}
To compute $\cZ_1$, we have  two computational challenges:

$\bullet$ \emph{Challenge 1}: we need to compute the \emph{entire path} of the optimal alignment matrix $\hat{M}_{n, m}(z)$ for all $z \in \RR$.
However, it seems intractable because we have to solve \eq{eq:optimal_alignment_parametrized} for \emph{infinitely} many values of $z \in \RR$ to obtain $\hat{M}_{n, m}(z)$ and check if it is the same as $\hat{M}^{\rm obs}$ or not.
	
$\bullet$ \emph{Challenge 2}: we have to solve \eq{eq:optimal_alignment_parametrized} on a huge set of all alignment matrices $\cM_{n, m}$ that grows exponentially.

%
In \S \ref{subsubsec:para_OA}, we introduce an efficient approach to resolve the first challenge.
We show that the set $\cZ_1$ can be computed with \emph{a finite number of operations}.
Finally, in \S \ref{subsubsec:para_DTW}, we propose a method to address the second challenge based on the concept of dynamic programming in the standard DTW.

\subsubsection{Parametric Optimal Alignment} \label{subsubsec:para_OA}

\begin{algorithm}[!t]
\renewcommand{\algorithmicrequire}{\textbf{Input:}}
\renewcommand{\algorithmicensure}{\textbf{Output:}}
\begin{scriptsize}
 \begin{algorithmic}[1]
  \REQUIRE $n, m, \cM_{n, m} \quad \quad \quad \quad $  
  \vspace{4pt}
  \STATE $t \lA 1$, $z_1 \lA -\infty$
  \vspace{4pt}
  \STATE $\hat{M}_t \lA \hat{M}_{n, m} (z_t) = \argmin \limits_{M \in \cM_{n, m}} L \big (M, z_t \big )$
  \WHILE { $z_t < +\infty$}
  \vspace{4pt}
  \STATE Find the next breakpoint $z_{t+1} > z_t$ and the next optimal alignment matrix $\hat{M}_{t+1}$ s.t. 
  \vspace{4pt}
  \begin{center}
   $L_{n, m}(\hat{M}_t, z_{t+1}) = L_{n, m}(\hat{M}_{t + 1}, z_{t+1}).$
  \end{center}
  \vspace{4pt}
  \STATE $t \lA t+1$
  \vspace{4pt}
  \ENDWHILE
  \vspace{4pt}
  \STATE $\cT \lA t$
  \vspace{4pt}
  \ENSURE $\big \{\hat{M}_t \big \}_{t=1}^{\cT - 1}$, $\big \{z_t \big \}_{t=1}^{\cT}$
 \end{algorithmic}
\end{scriptsize}
\caption{{\tt paraOA}$(n, m, \cM_{n, m})$}
\label{alg:paraOptAlign}
\end{algorithm}

\begin{figure}[!t]
\centering
\includegraphics[width=.75\linewidth]{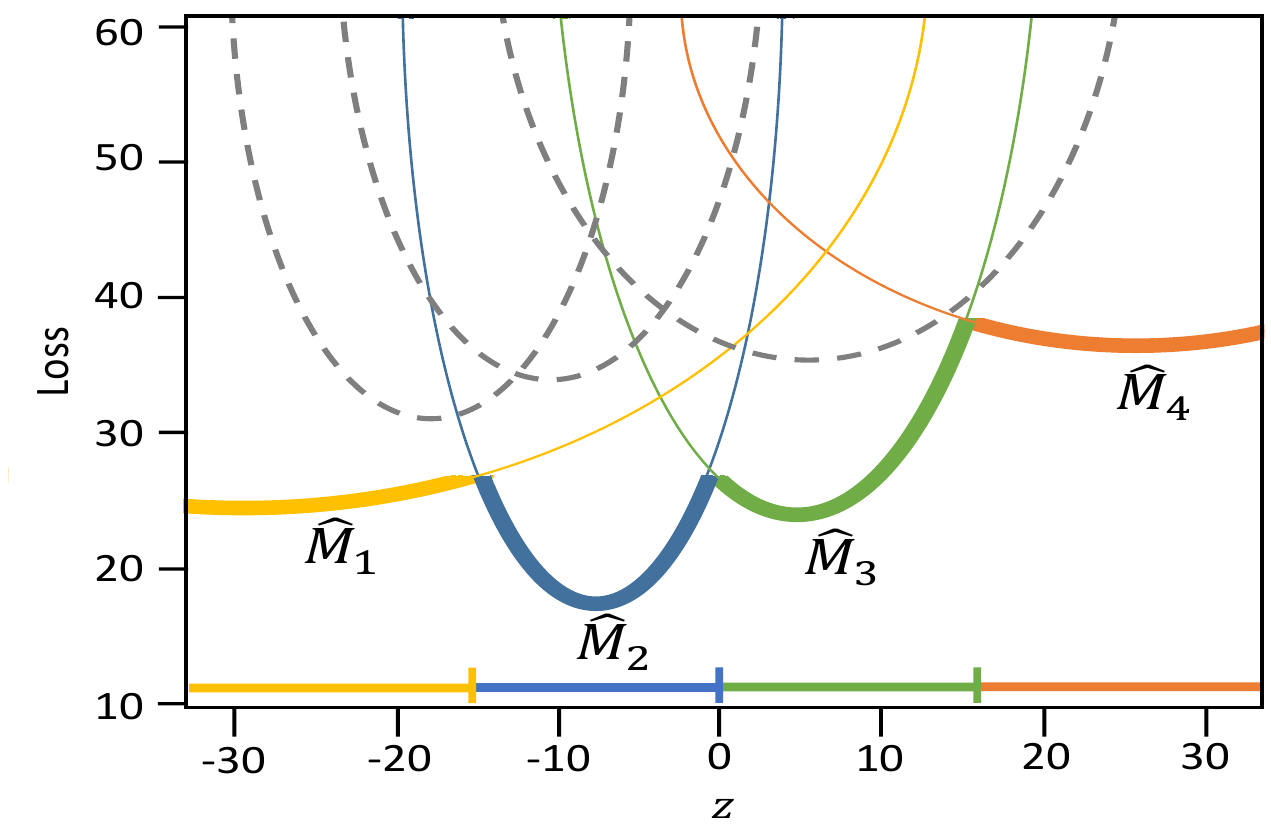}
\caption{
\footnotesize A set of QFs each of which corresponds to an alignment matrix $M \in \cM_{n, m}$. 
The dotted grey QFs correspond to alignment matrices that are NOT optimal for any $z \in \RR$. A set $\{\hat{M}_1, \hat{M}_2, \hat{M}_3, \hat{M}_4\}$ contains matrices that are \emph{optimal} for some $z \in \RR$.
Our goal is to introduce an approach to efficiently identify this set of optimal alignment matrices and the lower envelope.}
\label{fig:piecewise}
\end{figure}

Algorithm~\ref{alg:paraOptAlign} shows the proposed parametric optimal alignment method.
Here, 
for each alignment matrix $M \in \cM_{n, m}$,
the loss function $L_{n, m}(M, z)$ is written as a QF of $z$ as in \eq{eq:cost_parametrized}.
Since the number of matrices $M$ in
$\cM_{n, m}$ is finite, the optimal alignment problem \eq{eq:optimal_cost_parametrized} can be characterized by a finite number of these QFs.

Figure \ref{fig:piecewise} illustrates the set of QFs
each of which corresponds to an alignment matrix $M \in \cM_{n, m}$.
Since the minimum loss for each $z \in \RR$ is the point-wise minimum of these QFs, the $\hat{L}_{n, m}(z)$ in \eq{eq:optimal_cost_parametrized} 
is the lower envelope of the set of QFs 
that is a \emph{piecewise QF} of $z$.
Parametric optimal alignment is interpreted as the problem of identifying this piecewise QF. 

In Algorithm \ref{alg:paraOptAlign}, multiple 
\emph{breakpoints}
$
 z_1 < z_2 < \ldots < z_{\cT}
$
are computed one by one.
Each breakpoint $z_t, t \in [\cT],$ indicates a point at which
the optimal alignment matrix changes, where $\cT$ is the number of breakpoints. 
By finding all these breakpoints 
and the optimal alignment matrices, 
the piecewise QF $\hat{L}_{n, m}(z)$ as in Fig. \ref{fig:piecewise} (the curves in yellow, blue, green and orange) can be identified. 
Finally, the entire path of optimal alignment matrices for $z \in \RR$ 
is given by
\begin{align*}
	 \hat{M}_{n, m}(z)	=  \hat{M}_t, ~ t \in [\cT - 1],
	 \text{ if }
	 z \in [z_t, z_{t + 1}].
\end{align*}
%
%
More details are deferred to Appendix \ref{appendix:details_algorithm_paraOptAlign}.


\subsubsection{Parametric DTW} \label{subsubsec:para_DTW}

Unfortunately, Algorithm \ref{alg:paraOptAlign}
is impractical because the cardinality of $\cM_{n, m}$ is exponentially increasing with
$n$ and $m$. 
To address the issue, we utilize the concept of the standard DTW and apply it to the parametric case, which we call \emph{parametric DTW}.
The idea is to exclude the alignment matrices $M \in \cM_{n, m}$ which can never be optimal at any $z \in \RR$.
Instead of considering a huge set $\cM_{n, m}$, we only construct a smaller set $\tilde{\cM}_{n, m}$.
%
We briefly review the standard DTW as follows.

\paragraph{Standard DTW (for a single value of $z$).}
In the standard DTW with $n$ and $m$, we use $n \times m$ table 
whose
$(i, j)^{\rm th}$ element contains $\hat{M}_{i, j}(z)$ that is 
the optimal alignment matrix for the sub-sequences
$\bm X(z)_{1:i}$ and $\bm Y(z)_{1:j}$. 
The optimal alignment matrix $\hat{M}_{i, j}(z)$ for each sub-problem with $i$ and $j$ is used for efficiently computing the optimal alignment matrix $\hat{M}_{n, m}(z)$ for the original problem with $n$ and $m$ by using \emph{Bellman equation} (see Appendix \ref{appendix:review_standard_DTW} for the details). 
%

\paragraph{Parametric DTW (for all $z \in \RR$).} We construct an
$n \times m$ table whose $(i, j)^{\rm th}$ element contains
\begin{align*}
 \hat{\cM}_{i, j}
 =
 \left\{
M \in \cM_{i, j}
 \mid
 {\exists}z \in \RR
 \text{ s.t. }
 \hat{L}_{i, j}(z) = L_{i, j}(M, z) 
\right\}
\end{align*}
which is a \emph{set of optimal alignment matrices} that are optimal for some $z$. 
%
%
For example, $\hat{\cM}_{i, j}$ is a set $\big \{\hat{M}_1, \hat{M}_2, \hat{M}_3, \hat{M}_4 \big \}$ in Fig. \ref{fig:piecewise}.
To identify $\hat{\cM}_{i, j}$, we construct a set $\tilde{\cM}_{i, j} \supseteq \hat{\cM}_{i, j}$, which is a set of alignment matrices having potential to be optimal at some $z$.
The construction of $\hat{\cM}_{i, j}$ is described as follows.

\begin{lemma} \label{lemma:bellman_parametric}
For $i \in [n]$ and $j \in [m]$, the set of optimal alignment matrices $\hat{\cM}_{i, j}$ is defined as 
\begin{align} \label{eq:bellman_parametric_opt_matrices}
	\hat{\cM}_{i, j} = \argmin \limits_{M \in \tilde{\cM}_{i, j}} L_{i, j} \big (M, z \big ),
\end{align}
where $\tilde{\cM}_{i, j}$ is a set of alignment matrices having potential to be optimal and it is constructed as
\begin{footnotesize}
\begin{align*} 
	\tilde{\cM}_{i, j} = 
	\left \{ 
	\begin{array}{l}
		{\rm vstack} \Big (\hat{M}, ~ (0, ...,0, 1) \Big ), ~ \forall \hat{M} \in \hat{\cM}_{i - 1, j},\\ 
		{\rm hstack} \Big (\hat{M}, ~ (0, ...,0, 1)^\top \Big ), ~ \forall \hat{M} \in \hat{\cM}_{i, j - 1}, \\ 
		\begin{pmatrix}
			\hat{M} & ~ 0 \\ 
			0 & ~ 1 \\ 
		\end{pmatrix},
		~ \forall \hat{M} \in \hat{\cM}_{i - 1, j - 1}
	\end{array}
	\right \}.
\end{align*}
\end{footnotesize}
\end{lemma}

%


The proof of Lemma \ref{lemma:bellman_parametric} is deferred to Appendix \ref{appendix:proof_lemma_bellman_parametric}. From Lemma \ref{lemma:bellman_parametric}, we efficiently construct $\tilde{\cM}_{i, j}$.
Then, $\tilde{\cM}_{i, j}$ is used to compute $\hat{\cM}_{i, j}$ by {\tt paraOA}$(i, j, \tilde{\cM}_{i, j})$ in Algorithm \ref{alg:paraOptAlign}.
%
%
By repeating the recursive procedure
from smaller $i$ and $j$ to larger $i$ and $j$, 
we can end up with
$\tilde{\cM}_{n, m} \supseteq \hat{\cM}_{n, m}$.
%
The set
$\tilde{\cM}_{n, m}$
can be much smaller than 
$\cM_{n, m}$,
which makes
the cost of
${\tt paraOA}(n, k, \tilde{\cM}_{n, m})$ substantially decreased
compared to
${\tt paraOA}(n, k, \cM_{n, m})$.
%
The parametric DTW is presented in Algorithm \ref{alg:paraDTW} whose output is used to identify 
\[
	\cZ_1 = \mathop{\cup}_{\hat{M}_{n, m}(z) \in \hat{\cM}_{n, m}} \left \{z: \hat{M}_{n, m}(z) = \hat{M}^{\rm obs} \right \}.
\]
%

\paragraph{Complexity.}
The complexity of the parametric DTW in Algorithm \ref{alg:paraDTW} is $\cO(n \times m \times \delta)$, 
where $\delta$ is the number of breakpoints in  Algorithm \ref{alg:paraOptAlign}.
In the worst-case, the value of $\delta$ still grows exponentially. 
This is a common issue in other parametric programming applications such as Lasso regularization path. 
However, fortunately, it has been well-recognized that this worst case rarely happens, and the value of $\delta$ is almost linearly increasing w.r.t the problem size in practice (e.g., \cite{le2021parametric}).
This phenomenon is well-known in the parametric programming literature~\cite{hastie2004entire,park2007l1,mairal2012complexity}.

\begin{algorithm}[!t]
\renewcommand{\algorithmicrequire}{\textbf{Input:}}
\renewcommand{\algorithmicensure}{\textbf{Output:}}
\begin{scriptsize}
 \begin{algorithmic}[1]
  \REQUIRE $\bm X(z)$ and $\bm Y(z)$
  \vspace{1pt}
  \FOR{$i=1$ to $n$}
  \vspace{1pt}
  \FOR{$j=1$ to $m$}
  \vspace{2pt}
  \STATE $\tilde{\cM}_{i, j}$ $\lA$ Lemma \ref{lemma:bellman_parametric}
  \vspace{3pt}
  \STATE \hspace{-2pt}$ \{\hat{M}_t  \}_{t=1}^{\cT - 1}$, $ \{z_t  \}_{t=1}^{\cT}$ $\lA$ {\tt paraOA}($i, j, \tilde{\cM}_{i, j}$) $~$ // Algorithm \ref{alg:paraOptAlign}
  \vspace{3pt}
  \STATE $\hat{\cM}_{i, j} \lA \{\hat{M}_t  \}_{t=1}^{\cT - 1}$ 
  \vspace{3pt}
  \ENDFOR
  \vspace{2pt}
  \ENDFOR
  \vspace{3pt}
  \ENSURE $\hat{\cM}_{n, m}$
 \end{algorithmic}
\end{scriptsize}
\caption{{\tt paraDTW}($\bm X(z), \bm Y(z)$)}
\label{alg:paraDTW}
\end{algorithm}

\begin{algorithm}[!t]
\renewcommand{\algorithmicrequire}{\textbf{Input:}}
\renewcommand{\algorithmicensure}{\textbf{Output:}}
\begin{scriptsize}
 \begin{algorithmic}[1]
  \REQUIRE $\bm X^{\rm obs}$ and $\bm Y^{\rm obs}$
  \vspace{2pt}
  \STATE $\hat{M}^{\rm obs} \lA \cA(\bm X^{\rm obs}, \bm Y^{\rm obs})$
  \vspace{2pt}
  \STATE $\bm X(z)$ and $\bm Y(z)$ $\lA$ Eq. \eq{eq:Z_z_Y_z} 
  \vspace{2pt}
  \STATE $\hat{\cM}_{n, m}$ $\lA$ {\tt paraDTW}($\bm X(z)$, $\bm Y(z)$)  $\quad$ // Algorithm \ref{alg:paraDTW}
  \vspace{2pt}
  \STATE $\cZ_1 \lA \mathop{\cup}_{\hat{M}_{n, m}(z) \in \hat{\cM}_{n, m}} \{z: \hat{M}_{n, m}(z) = \hat{M}^{\rm obs}\}$
  \vspace{2pt}
  \STATE $\cZ_2 \lA$ Lemma \ref{lemma:cZ_2}
  \vspace{2pt}
  \STATE $\cZ = \cZ_1 \cap \cZ_2$
  \vspace{2pt}
  \STATE $p_{\rm selective}$ $\lA$ Eq. \eq{eq:selective_p_parametrized}
  \vspace{2pt}
  \ENSURE $p_{\rm selective}$
 \end{algorithmic}
\end{scriptsize}
\caption{Proposed SI Method (SI-DTW)}
\label{alg:si_dtw}
\end{algorithm}

\subsection{Construction of $\cZ_2$ in \eq{eq:cZ_2}}
We present the construction of $\cZ_2$ as follows.

\begin{lemma} \label{lemma:cZ_2}
The set $\cZ_2$ in \eq{eq:cZ_2} is an interval:
\begin{align} \label{eq:identification_cZ_2}
	\hspace{-5pt} \cZ_2 = \left \{ z ~ \Big | ~ 
	\max \limits_{j: \nu_j^{(2)} > 0} \frac{ - \nu_j^{(1)}}{\nu_j^{(2)}}
	\leq z \leq
	\min \limits_{j: \nu_j^{(2)} < 0} \frac{ - \nu_j^{(1)}}{\nu_j^{(2)}}
	\right \},
\end{align}
where $\bm \nu^{(1)} = \hat{\bm s}^{\rm obs}  \circ \hat{M}_{\rm vec} \circ \Omega \bm a$,
$\bm \nu^{(2)} = \hat{\bm s}^{\rm obs}  \circ \hat{M}_{\rm vec} \circ \Omega \bm b$.
\end{lemma}


The proof of Lemma \ref{lemma:cZ_2} is deferred to Appendix \ref{appendix:proof_lemma_cZ_2}. After computing $\cZ_2$, we obtain $\cZ = \cZ_1 \cap \cZ_2$ and compute the selective $p$-value in \eq{eq:selective_p_parametrized}.
%
The entire proposed 
method 
is summarized in Algorithm~\ref{alg:si_dtw}.

\section{Experiment} \label{sec:experiment}

\begin{figure}[!t]
\begin{subfigure}{.495\linewidth}
  \centering
  \includegraphics[width=\linewidth]{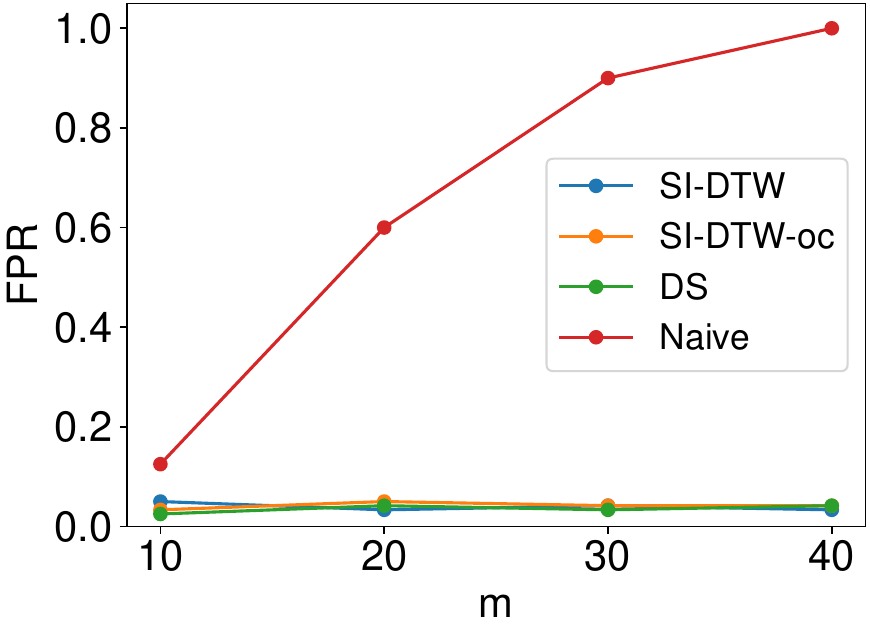}  
  \caption{Independence}
\end{subfigure}
\begin{subfigure}{.495\linewidth}
  \centering
  \includegraphics[width=\linewidth]{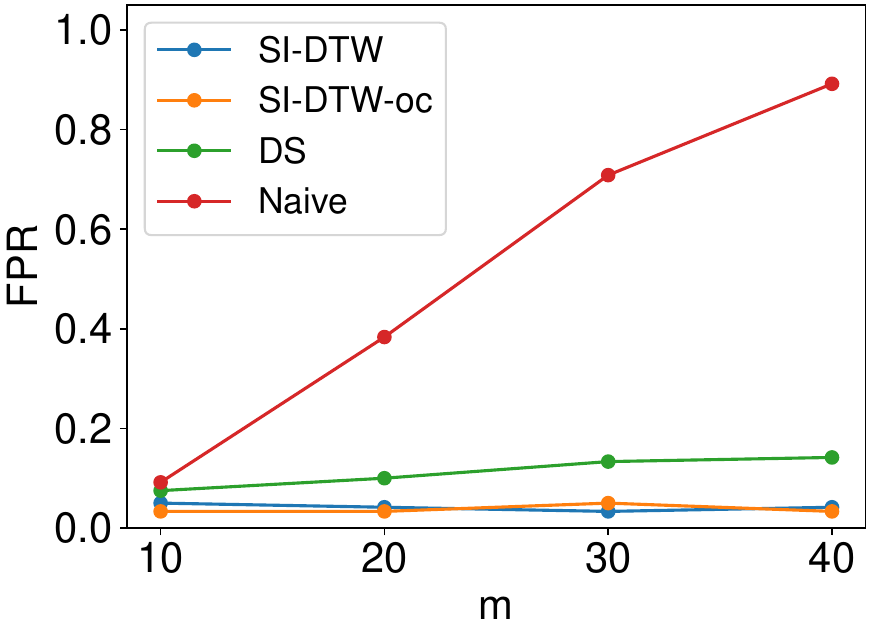} 
  \caption{Correlation}
\end{subfigure}
%
\caption{FPR Comparison} 
\label{fig:fpr_ci_coverage}
\end{figure}

\begin{figure}[!t]
\begin{subfigure}{.495\linewidth}
  \centering
  \includegraphics[width=\linewidth]{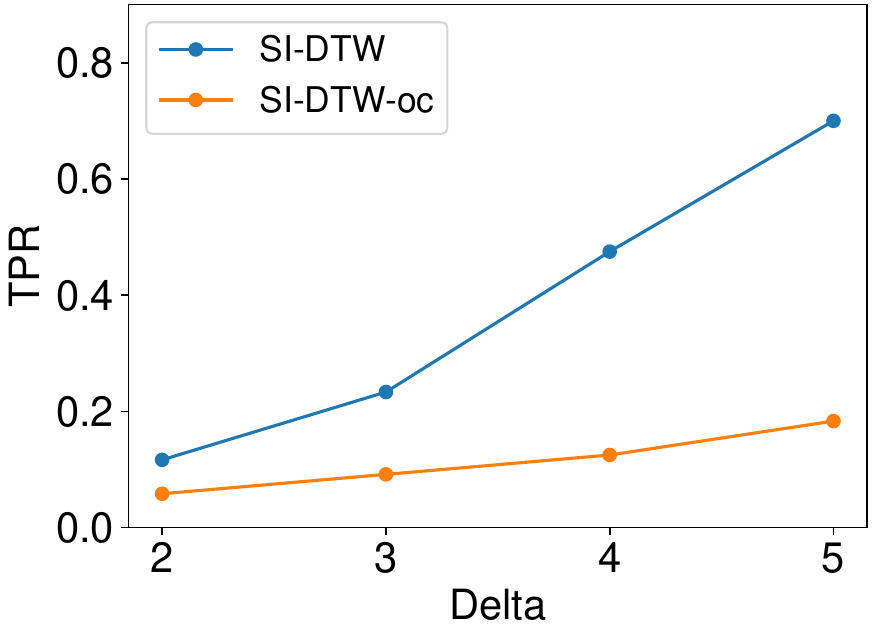}  
  \caption{Independence}
\end{subfigure}
\begin{subfigure}{.495\linewidth}
  \centering
  \includegraphics[width=\linewidth]{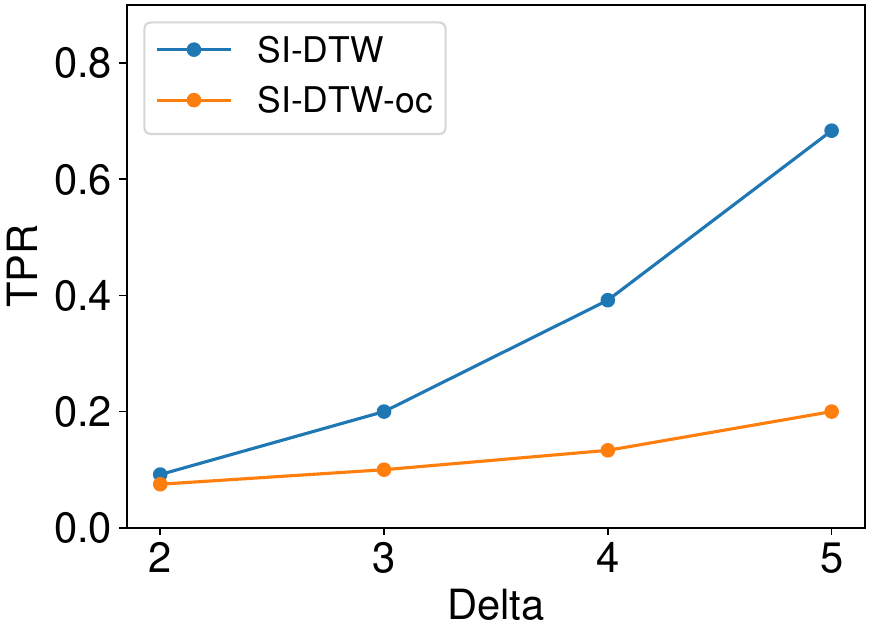} 
  \caption{Correlation}
\end{subfigure}
\caption{TPR comparison} 
\label{fig:tpr}
\end{figure}

%
Here, we only highlight the main results.
More details can be found in Appendix \ref{appendix:details_experiments}.

\subsection{Synthetic Data Experiments} \label{subsec:synthetic_experiments}

\textbf{Experimental setup.} We compared the SI-DTW (proposed method) with SI-DTW-oc (simple version of the proposed method that does not require parametric DTW algorithm), naive method and data splitting (DS). The details of SI-DTW-oc, naive, and DS are described in Appendix \ref{appendix:details_experiments}.

We considered the following covariance matrices:

$\bullet$ Independence: $\Sigma_{\bm X} = I_n$,   $\Sigma_{\bm Y} = I_m$.

$\bullet$ Correlation: $\Sigma_{\bm X} = \left [0.5^{{\rm abs}(i - i^\prime)} \right ]_{ii^\prime} \in \RR^{n \times n}$,  $\Sigma_{\bm Y} =\left [0.5^{{\rm abs}(j - j^\prime)} \right ]_{jj^\prime} \in \RR^{m \times m}$.

We generated $\bm X$ and $\bm Y$ with $\bm \mu_{\bm X} = \bm 0_n$, $\bm \mu_{\bm Y} = \bm 0_m + \Delta$ (element-wise addition), $\bm \veps_{\bm X} \sim \NN (\bm 0_n, \Sigma_{\bm X})$, and $\bm \veps_{\bm Y} \sim \NN (\bm 0_m, \Sigma_{\bm Y})$.
Regarding the experiments of FPR and coverage properties of the confidence interval (CI), we set $\Delta = 0$, $n = 10$, and ran 120 trials for each $m \in \{ 10, 20, 30, 40\} $.
In regard to the experiments of true positive rate (TPR) and CI length,  we set $n = 10$, $m = 20$, and ran 120 trials for each $\Delta \in \{ 2, 3, 4, 5\} $. 
We set the significance level $\alpha = 0.05$ and $\tau = 2.0$.


\begin{table}[!t]
\centering
\caption{Results on heart beat dataset}
\vspace{2pt}
\begin{tabular}{ |l|c|c|c|c| } 
  \hline
  & \multicolumn{2}{|c|}{$N = 240$} & \multicolumn{2}{|c|}{$N = 480$} \\ 
  \hline
  & FPR & TPR & FPR & TPR \\
  \hline
  \hline
 \textbf{Naive} & 0.23 & N/A & 0.21 & N/A \\ 
 \hline
 \textbf{DS} & 0.07 & N/A & 0.08 &  N/A \\ 
  \hline
 \textbf{SI-DTW-oc} & \textbf{0.04} & 0.38 & \textbf{0.04} & 0.40 \\ 
  \hline
 \textbf{SI-DTW} & \textbf{0.03} & \textbf{0.71} & \textbf{0.04} & \textbf{0.72}\\ 
 \hline
\end{tabular}
\label{tbl:heart_beat}
\vspace{-5pt}
\end{table}

\begin{table}[!t]
\centering
\caption{Results on respiration dataset}
\vspace{2pt}
\begin{tabular}{ |l|c|c|c|c| } 
  \hline
  & \multicolumn{2}{|c|}{$N = 240$} & \multicolumn{2}{|c|}{$N = 480$} \\ 
  \hline
  & FPR & TPR & FPR & TPR \\
  \hline
  \hline
 \textbf{Naive} & 0.60  & N/A & 0.52 & N/A \\ 
 \hline
 \textbf{DS} & 0.12 & N/A & 0.13 & N/A \\ 
  \hline
 \textbf{SI-DTW-oc} & \textbf{0.03} & 0.22 & \textbf{0.04} & 0.20 \\ 
  \hline
 \textbf{SI-DTW} & \textbf{0.04} & \textbf{0.89} & \textbf{0.05} & \textbf{0.88}\\ 
 \hline
\end{tabular}
\label{tbl:respiration}
\vspace{-5pt}
\end{table}

\textbf{Numerical Result.} The results of the FPR control and coverage guarantee of CI are shown in Fig. \ref{fig:fpr_ci_coverage}.
The SI-DTW and SI-DTW-oc successfully controlled the FPR under $\alpha = 0.05$  as well as guaranteeing the $95\%$ coverage property of the CI in both cases of independence and correlation whereas the naive method and DS \emph{could not}. 
Because the naive method and DS failed to control the FPR, we no longer considered the TPR and CI length.
The TPR results are shown in Fig. \ref{fig:tpr}.
The SI-DTW has higher TPR than the SI-DTW-oc in all the cases.
Due to the space limitation, we deferred the results on CI length to Appendix \ref{appendix:details_experiments}.
%
In general, the TPR results are consistent with the results on CI length, i.e.,
the SI-DTW has higher TPR than SI-DTW-oc which indicates it has shorter CI.
Additionally, we conducted the experiments on computational time and the robustness of the proposed method.
%
The details are provided in Appendix \ref{appendix:details_experiments}.

\subsection{Real-data Examples} \label{subsec:real_data_experiments}

We consider two settings to demonstrate how the $p$-value of the DTW distance can be used in data analysis tasks.
In the first setting, we consider an abnormal time-series detection problem for heart-beat signals and respiration signals where the signals were generated by a generator called 
NeuroKit2 \cite{Makowski2021neurokit}.
In the second setting, we used six benchmark datasets: 
Italy Power Demand, Melbourne Pedestrian,
Smooth Subspace, EEG Eye State, China Town,
and Finger Movement. 
Each dataset contains two classes of time-series. 
The details of the datasets are in Appendix \ref{appendix:details_real_data}.

In our experiments, we picked the ``reference'' time series as follows.
Given a set of normal time-series, we randomly choose one time series from this set and designate it as the reference time series for each run.
We also used an independent set of normal time-series for estimating $\Sigma_{\bm X}$ and $\Sigma_{\bm Y}$ by using empirical variance.

\textbf{Setting 1.} We considered the abnormal time-series detection on heart beat and respiration datasets.
The goal is to test if the new query time-series is normal or abnormal, based on the $p$-value of the DTW distance between the query and reference time-series.
Here, we conducted the comparisons for $N \in \{240, 480\}$ ($N / 2$ normal time-series and $N/2$ abnormal time-series).
The results are shown in Tabs. \ref{tbl:heart_beat} and \ref{tbl:respiration}.
%
Because the naive method and DS cannot properly control the FPR under $\alpha=0.05$, a comparison of TPR is no longer needed.
While both SI-DTW-oc and SI-DTW could control the FPR, the SI-DTW method had higher TPR than the SI-DTW-oc in all the cases.

%

\begin{figure}[!t]
\begin{subfigure}{.495\linewidth}
  \centering
  \includegraphics[width=\linewidth]{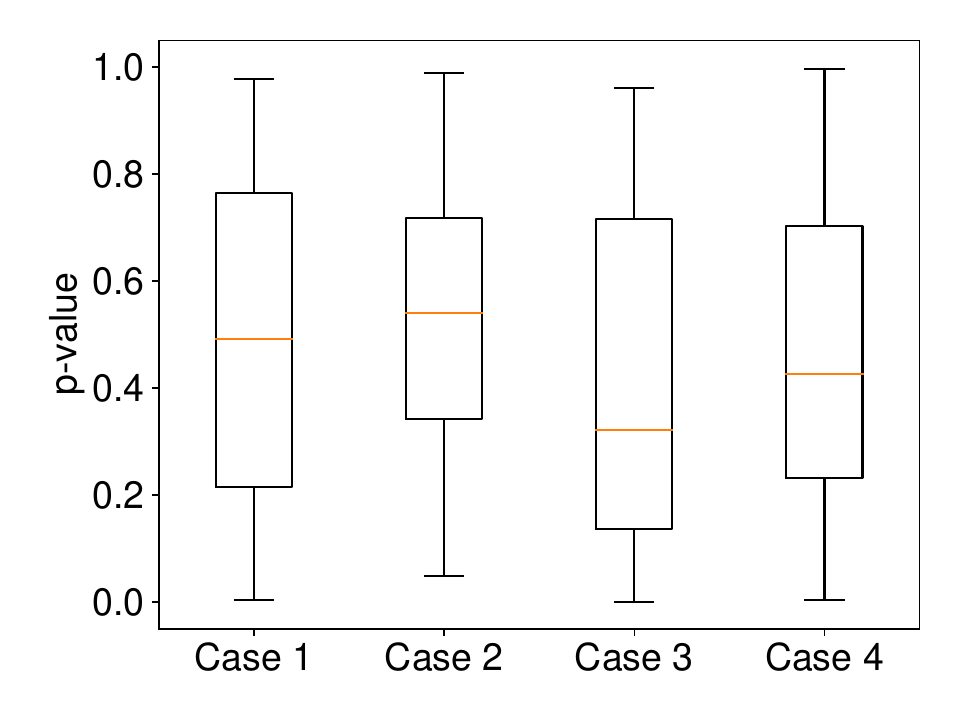}  
  \caption{Italy}
\end{subfigure}
\begin{subfigure}{.495\linewidth}
  \centering
  \includegraphics[width=\linewidth]{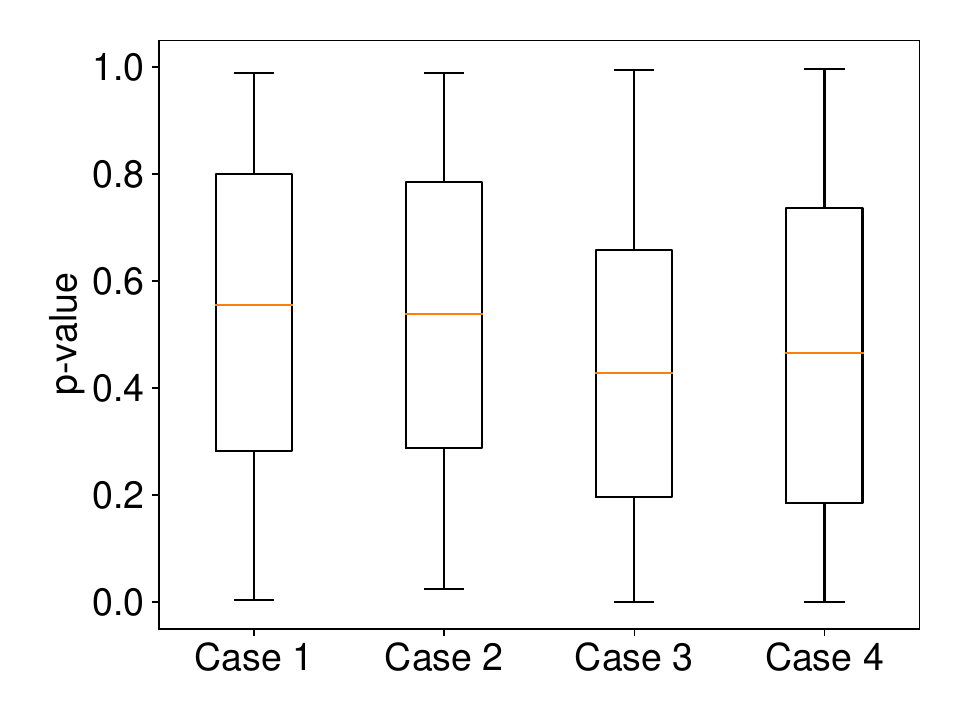}  
  \caption{Melbourne}
\end{subfigure}
\begin{subfigure}{.495\linewidth}
  \centering
  \includegraphics[width=\linewidth]{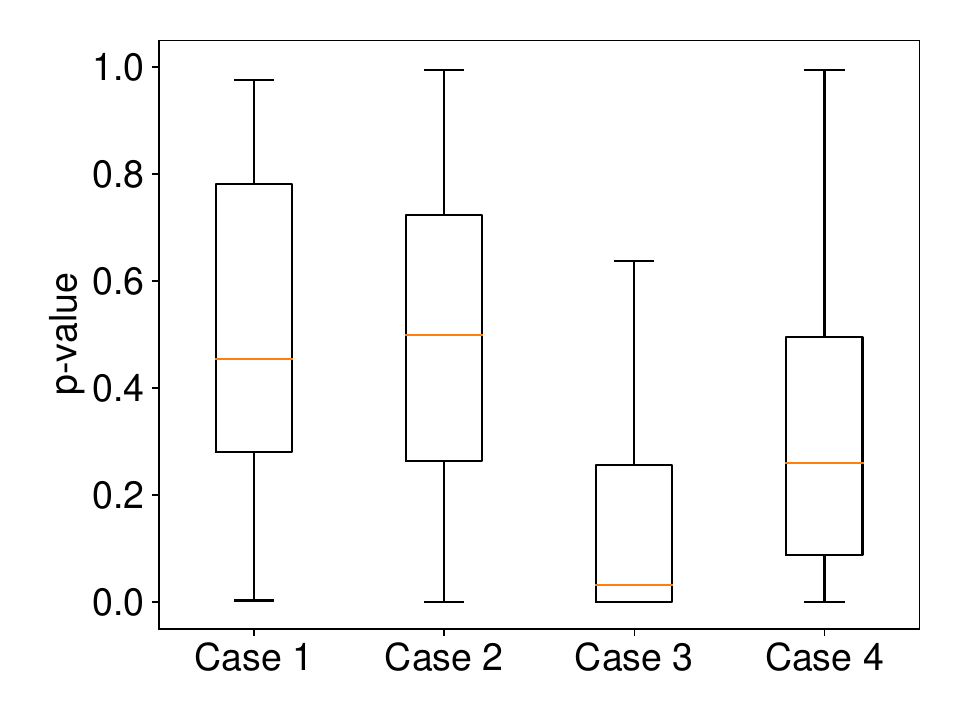}  
  \caption{Smooth Subspace}
 \end{subfigure}
\begin{subfigure}{.495\linewidth}
  \centering
  \includegraphics[width=\linewidth]{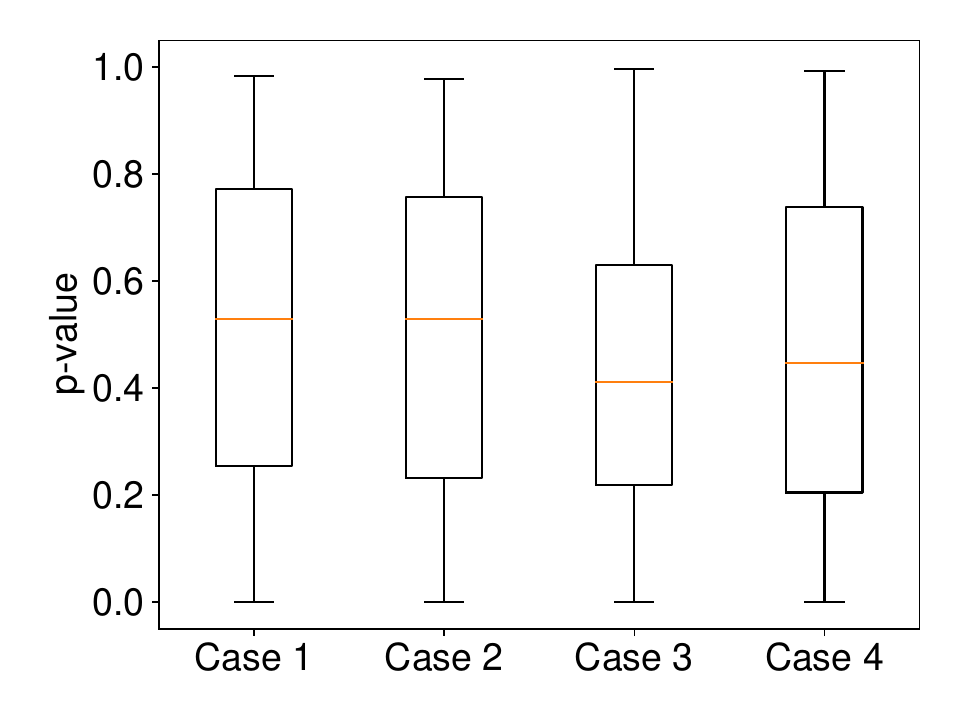}  
  \caption{EEG Eye State}
\end{subfigure}
\begin{subfigure}{.495\linewidth}
  \centering
  \includegraphics[width=\linewidth]{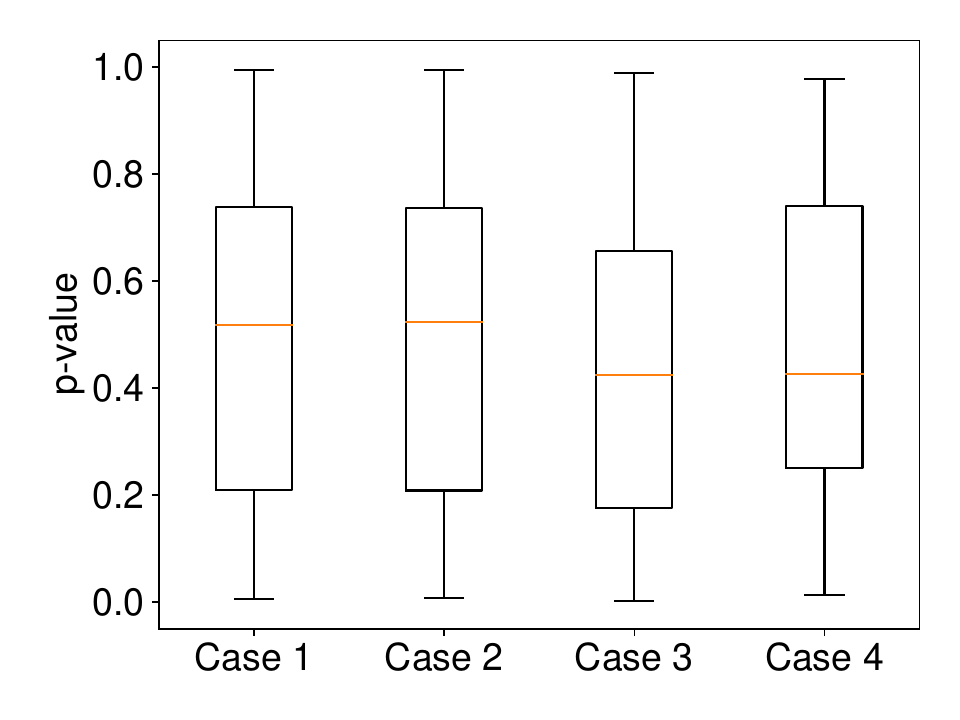}  
  \caption{China Town}
\end{subfigure}
\begin{subfigure}{.495\linewidth}
  \centering
  \includegraphics[width=\linewidth]{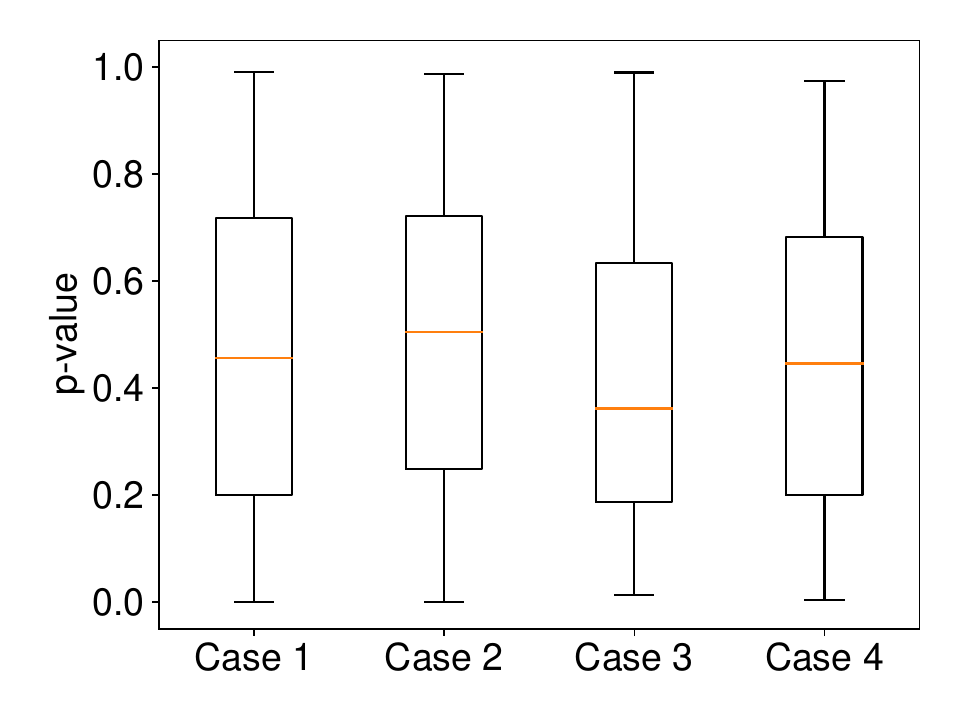}  
  \caption{Finger Movement}
 \end{subfigure}
\caption{Boxplots of the distribution of the $p$-values.}
\label{fig:boxplot_p_value}
\vspace{-15pt}
\end{figure}

\textbf{Setting 2.}
For each of the six datasets, we present the distributions of the $p$-values in four cases:

$\bullet$ Case 1:  the $p$-values of the SI-DTW when two time-series are sampled from the same class,

$\bullet$ Case 2:  the $p$-values of the SI-DTW-oc  when two time-series are sampled from the same class,

$\bullet$ Case 3: the $p$-values of the SI-DTW  when two time-series are  sampled from different classes,

$\bullet$ Case 4: the $p$-values of the SI-DTW-oc  when two time-series are  sampled from different classes.

If the two time-series are from the same class, it can be seen as a situation in which both the query and reference time-series are normal. 
If the two time-series are from different classes, it can be viewed as a case where the time-series from the first class is an abnormal query and the time-series from the second class is a normal reference time-series.
%
In the experiments, we randomly selected pairs of time-series for each time of running SI-DTW and SI-DTW-oc.
Fig. \ref{fig:boxplot_p_value} shows the boxplots of the distribution of the $p$-values. 
We compare the performance between SI-DTW and SI-DTW-oc methods (i.e., Case 1 vs. Case 2 and Case 3 vs. Case 4).
The $p$-values of the former tend to be smaller than those of the latter.
This is because the truncation region of SI-DTW tends to be larger than that of SI-DTW-oc, i.e., we have more information for conducting inference in SI-DTW compared to SI-DTW-oc.
The results indicate that the SI-DTW method is more powerful than the SI-DTW-oc.
%

\section{Conclusion} \label{sec:conclusion}

We present a valid inference method for the DTW distance between two time-series.
This is the first method that can provide valid $p$-values and confidence intervals for the DTW distance.
%
%
We believe this study is an important contribution in introducing a new aspect of statistical reliability in the literature of time-series data.
Some open questions remain:

$\bullet$ The proposed method currently can only handle the case in which the test-statistic is a linear contrast w.r.t the data. 
Therefore, an extension to quadratic test-statistics could be a potential future direction.

$\bullet$ There are several variants of the DTW distance, such as Soft-DTW or FastDTW, which have been proposed in the literature for the purpose of reducing computation time of the DTW. 
Thus, extensions of the proposed framework to these variants would also stand as a valuable future contribution.

\bibliographystyle{abbrv}
\bibliography{ref}

\newpage
\onecolumn

\section{Appendix} \label{sec:appx}

\subsection{Examples of $C_{\rm vec}(\mathbf{X}, \mathbf{Y})$, $\Omega$ and $\hat{M}_{\rm vec}$} \label{appendix:examples}

Given $\bm X =  (x_1, x_2)^\top$ and $\bm Y = (y_1, y_2)^\top$, the cost matrix is  
\begin{align*}
	C(\bm X, \bm Y) = 
	\begin{pmatrix}
		(x_1 - y_1)^2 & (x_1 - y_2)^2 \\ 
		(x_2 - y_1)^2 & (x_2 - y_2)^2
	\end{pmatrix}.
\end{align*}
Then, we have 
\begin{align*}
	C_{\rm vec} (\bm X, \bm Y) 
	= 
	\begin{pmatrix}
		(x_1 - y_1)^2 \\ 
		(x_1 - y_2)^2 \\ 
		(x_2 - y_1)^2 \\
		(x_2 - y_2)^2
	\end{pmatrix}
	= 
	\Omega
	\begin{pmatrix}
		x_1 \\ 
		x_2 \\ 
		y_1 \\
		y_ 2
	\end{pmatrix}
	\circ
	\Omega
	\begin{pmatrix}
		x_1 \\ 
		x_2 \\ 
		y_1 \\
		y_ 2
	\end{pmatrix},
\end{align*}
where
$
	\Omega = 
	\begin{pmatrix}
		1 & 0 & - 1 & 0 \\ 
		1 & 0 & 0 & - 1 \\ 
		0 & 1 & - 1 & 0 \\ 
		0 & 1 & 0 & - 1 
	\end{pmatrix}
$.
Similarly, given 
$
\hat{M} = 
\begin{pmatrix}
	1 & 0 \\ 
	0 & 1
\end{pmatrix}
$,
then 
$
\hat{M}_{\rm vec} 
= 
\begin{pmatrix}
	1 & 0 & 0 & 1
\end{pmatrix}^\top 
$.

\subsection{Proof of Lemma \ref{lemma:valid_selective_p}} \label{appx:proof_valid_selective_p}
We have 
\[
	\hspace{-6pt}
	\bm \eta_{\hat{M}, \hat{\bm s}}^\top {\bm X \choose \bm Y} \mid 
	\left \{ 
		\cA(\bm X, \bm Y) = \hat{M}^{\rm obs},
		\cS(\bm X, \bm Y) = \hat{\bm s}^{\rm obs},
		\cQ (\bm X, \bm Y) = \hat{\bm q}^{\rm obs}
	\right \}
	\sim 
	{\rm TN} 
	\left (
	\bm \eta_{\hat{M}, \hat{\bm s}}^\top 
	{\bm \mu_{\bm X} \choose \bm \mu_{\bm Y}},
	\bm \eta_{\hat{M}, \hat{\bm s}}^\top \Sigma \bm \eta_{\hat{M}, \hat{\bm s}},
	\cZ
	\right ),
\]
which is a truncated normal distribution with a mean $\bm \eta_{\hat{M}, \hat{\bm s}}^\top 
	{\bm \mu_{\bm X} \choose \bm \mu_{\bm Y}}$,
 variance $\bm \eta_{\hat{M}, \hat{\bm s}}^\top \Sigma \bm \eta_{\hat{M}, \hat{\bm s}}$, and the truncation region $\cZ$ described in \eq{eq:cZ}.
Therefore, under the null hypothesis,
\begin{align*}
	\mathbb{P}_{\rm H_0}  \Big (
	p_{\rm sel} \leq \alpha \mid 
	\cA(\bm X, \bm Y) = \hat{M}^{\rm obs},
	\cS(\bm X, \bm Y) = \hat{\bm s}^{\rm obs},
	\cQ (\bm X, \bm Y) = \hat{\bm q}^{\rm obs}
	\Big ) \leq \alpha, \quad \forall \alpha \in [0, 1].
\end{align*}

Next, we have 
\begin{align*}
	&\mathbb{P}_{\rm H_0}  \Big (
	p_{\rm sel} \leq \alpha \mid 
	\cA(\bm X, \bm Y) = \hat{M}^{\rm obs},
	\cS(\bm X, \bm Y) = \hat{\bm s}^{\rm obs}
	\Big ) \\ 
	& = \int \mathbb{P}_{\rm H_0}  \left (
	p_{\rm sel} \leq \alpha ~ \Bigg | 
	\begin{array}{l}
	\cA(\bm X, \bm Y) = \hat{M}^{\rm obs}, \\ 
	\cS(\bm X, \bm Y) = \hat{\bm s}^{\rm obs}, \\ 
	\cQ (\bm X, \bm Y) = \hat{\bm q}^{\rm obs}
	\end{array}
	\right )
	\mathbb{P}_{\rm H_0} \left (
		\cQ (\bm X, \bm Y) = \hat{\bm q}^{\rm obs}
		~ \Big |  
		\begin{array}{l}
		\cA(\bm X, \bm Y) = \hat{M}^{\rm obs}, \\ 
		\cS(\bm X, \bm Y) = \hat{\bm s}^{\rm obs}
		\end{array}
	\right )
	d\cQ (\bm X, \bm Y)
	\\ 
	& \leq \int \alpha ~ \mathbb{P}_{\rm H_0} \left (
		\cQ (\bm X, \bm Y) = \hat{\bm q}^{\rm obs}
		~ \Big |  
		\begin{array}{l}
		\cA(\bm X, \bm Y) = \hat{M}^{\rm obs}, \\ 
		\cS(\bm X, \bm Y) = \hat{\bm s}^{\rm obs}
		\end{array}
	\right )
	d\cQ (\bm X, \bm Y) \\ 
	& = \alpha
	\int
	\mathbb{P}_{\rm H_0} \left (
		\cQ (\bm X, \bm Y) = \hat{\bm q}^{\rm obs}
		~ \Big |  
		\begin{array}{l}
		\cA(\bm X, \bm Y) = \hat{M}^{\rm obs}, \\ 
		\cS(\bm X, \bm Y) = \hat{\bm s}^{\rm obs}
		\end{array}
	\right )
	d\cQ (\bm X, \bm Y) \\ 
	& = \alpha.
\end{align*}
Finally, we obtain the result of Lemma \ref{lemma:valid_selective_p} as follows:
\begin{align*}
	 \mathbb{P}_{\rm H_0}  \Big (
	p_{\rm sel} \leq \alpha
	\Big ) 
	& = 
	\sum \limits_{\big (\hat{M}^{\rm obs}, \hat{\bm s}^{\rm obs} \big )}
	\mathbb{P}_{\rm H_0}  \left (
	p_{\rm sel} \leq \alpha 
	~ \Big | 
	\begin{array}{l}
	\cA(\bm X, \bm Y) = \hat{M}^{\rm obs}, \\ 
	\cS(\bm X, \bm Y) = \hat{\bm s}^{\rm obs}
	\end{array}
	\right )
	\mathbb{P}_{\rm H_0}  \Big (
	\cA(\bm X, \bm Y) = \hat{M}^{\rm obs}, \cS(\bm X, \bm Y) = \hat{\bm s}^{\rm obs}
	\Big )
	\\ 
	& \leq
	\sum \limits_{\big (\hat{M}^{\rm obs}, \hat{\bm s}^{\rm obs} \big )}
	\alpha ~ 
	\mathbb{P}_{\rm H_0}  \Big (
	\cA(\bm X, \bm Y) = \hat{M}^{\rm obs}, \cS(\bm X, \bm Y) = \hat{\bm s}^{\rm obs}
	\Big ) \\ 
	& = \alpha \sum \limits_{\big (\hat{M}^{\rm obs}, \hat{\bm s}^{\rm obs} \big )}
	\mathbb{P}_{\rm H_0}  \Big (
	\cA(\bm X, \bm Y) = \hat{M}^{\rm obs}, \cS(\bm X, \bm Y) = \hat{\bm s}^{\rm obs}
	\Big ) \\ 
	& = \alpha.
\end{align*}

\subsection{Selective Confidence Interval} \label{appendix:selective_ci}

Similar to the computation of the selective $p$-value, we can also compute the selective confidence interval $C_{\rm sel}$ of the DTW distance that satisfies the following $(1 - \alpha)$-coverage property:
\begin{align} \label{eq:ci_selective_validity}
	\PP\left( W^\ast \in C_{\rm {sel}}  
	\mid
	\cA(\bm X, \bm Y) = \hat{M}^{\rm obs}, ~
	\cS(\bm X, \bm Y) = \hat{\bm s}^{\rm obs}
	\right ) = 1 - \alpha,
\end{align}
for any $\alpha \in [0, 1]$. 
The selective CI is defined as 
\begin{align} \label{eq:ci_selective}
	C_{\rm {sel}} = 
	\left \{ 
	w \in \RR : 
	\frac{\alpha}{2} \leq
	F_{w, \sigma^2}^{\cZ} 
	\left (
	\bm \eta_{\hat{M}, \hat{\bm s}}^\top
	{\bm X^{\rm {obs}} \choose \bm Y^{\rm {obs}} }
	\right )
	\leq 1 - \frac{\alpha}{2} 
	\right \},
\end{align}
where the quantity 
\begin{align}
F_{w, \sigma^2}^{\cZ}
	\left (
	\bm \eta_{\hat{M}, \hat{\bm s}}^\top {\bm X \choose \bm Y}
	\right )
	\mid 
	\left \{ 
	\begin{array}{l}
	\cA(\bm X, \bm Y) = \hat{M}^{\rm obs}, 
	\cS(\bm X, \bm Y) = \hat{\bm s}^{\rm obs}, 
	\cQ (\bm X, \bm Y) = \hat{\bm q}^{\rm obs}
	\end{array} 
	\right \}  
\end{align}
is the c.d.f of the \emph{truncated} normal distribution with a mean 
$w \in \RR$, 
variance 
$\sigma^2 = \bm \eta_{\hat{M}, \hat{\bm s}}^\top 
\begin{pmatrix}
	\Sigma_{\bm X} & 0 \\ 
	0 & \Sigma_{\bm Y}
\end{pmatrix}
\bm \eta_{\hat{M}, \hat{\bm s}}^\top$,
and truncation region $\cZ$.


\subsection{Proof of Lemma \ref{lemma:data_line}} \label{appendix:proof_lemma_data_line}

According to the third condition in \eq{eq:conditional_data_space}, we have 
\begin{align*}
	\cQ (\bm X, \bm Y) & =  \hat{\bm q}^{\rm obs} \\ 
	\Leftrightarrow 
	\Big ( I_{n+m} - 
	\bm b
	\bm \eta_{\hat{M}, \hat{\bm s}}^\top \Big ) 
	( \bm X ~ \bm Y )^\top
	& = 
	\hat{\bm q}^{\rm obs}\\ 
	\Leftrightarrow 
	( \bm X ~ \bm Y )^\top
	& = 
	\hat{\bm q}^{\rm obs}
	+ \bm b
	\bm \eta_{\hat{M}, \hat{\bm s}}^\top  
	( \bm X ~ \bm Y )^\top.
\end{align*}
By defining 
$\bm a = \hat{\bm q}^{\rm obs}$,
$z = \bm \eta_{\hat{M}, \hat{\bm s}}^\top  \Big ( \bm X ~ \bm Y \Big )^\top$, and incorporating the first and second conditions in \eq{eq:conditional_data_space}, we obtain the results in Lemma \ref{lemma:data_line}.

\subsection{Proof of Lemma \ref{lemma:bellman_parametric}} \label{appendix:proof_lemma_bellman_parametric}

We prove the lemma by showing that any alignment matrix that is NOT in \begin{align*}
\hat{\cM}_{i - 1, j} ~ \bigcup ~ \hat{\cM}_{i, j - 1} ~ \bigcup ~  \hat{\cM}_{i - 1, j - 1}
\end{align*}
will never be a sub-matrix of the optimal alignment matrices in larger problem with $i$ and $j$ for any $z \in \RR$. 
Let $\RR^{(i - 1) \times j} \ni M \not \in \hat{\cM}_{i - 1, j}$ be the alignment matrix that is NOT optimal for all 
$z \in \RR$,
i.e.,
\begin{align*}
 L_{i - 1, j}(M, z) > \hat{L}_{i - 1, j}(z)
 ~~~
 \forall z \in \RR.
\end{align*}
It suggests that, for any $z \in \RR$ and $c_{ij}(z) = \big (\bm X_i(z) - \bm Y_i(z) \big)^2$,
\begin{align*}
	L_{i - 1, j}(M, z) + c_{ij}(z)
	& > \min \limits_{\hat{M} \in \hat{\cM}_{i - 1, j}} L_{i - 1, j}(\hat{M}, z) + c_{ij}(z) \\ 
	& = \hat{L}_{i - 1, j} (z) + c_{ij}(z)\\ 
	& \geq \hat{L}_{i, j} (z).
\end{align*}
Thus, $M$ cannot be a sub-matrix of the optimal alignment matrices in larger problem with $i$ and $j$ for any $z \in \RR$. 
Similar proofs can be applied for $\RR^{i \times (j - 1)} \ni M \not \in \bigcup ~ \hat{\cM}_{i, j - 1}$  and $\RR^{(i - 1) \times (j - 1)} \ni M \not \in \bigcup ~ \hat{\cM}_{i - 1, j - 1}$.
In other words, only the alignment matrices in 
$\hat{\cM}_{i - 1, j} ~ \bigcup ~ \hat{\cM}_{i, j - 1} ~ \bigcup ~  \hat{\cM}_{i - 1, j - 1}$ can be used as the sub-matrix of optimal alignment matrices for larger problems with $i$ and $j$.

\subsection{Proof of Lemma \ref{lemma:cZ_2}} \label{appendix:proof_lemma_cZ_2}

Let us first remind that 
$
	\hat{\bm s} = \cS(\bm X, \bm Y) =  
	{\rm sign} 
	\left ( 
	\hat{M}_{\rm vec} 
	\circ 
	\left [ \Omega  (\bm X ~ \bm Y)^\top \right ]
	\right ),
$
which is defined in  \eq{eq:test_statistic_first}.
Then, the set $\cZ_2$ can be re-written as follows:
\begin{align*}
	\cZ_2 
	& = \{ z \in \RR \mid \cS(\bm a + \bm b z) = \hat{\bm s}^{\rm {obs}} \} \\ 
	& = \left \{ z \in \RR \mid 
	{\rm sign} 
	\left ( 
	\hat{M}_{\rm vec} 
	\circ 
	 \Omega  (\bm a + \bm b z)
	\right )
	= \hat{\bm s}^{\rm {obs}} \right \} \\ 
	& = \left \{ z \in \RR \mid 
	\hat{\bm s}^{\rm {obs}}
	\circ 
	\hat{M}_{\rm vec} 
	\circ 
	 \Omega  (\bm a + \bm b z)
	\geq \bm 0 \right \}.
\end{align*}
By defining $\bm \nu^{(1)} = \hat{\bm s}^{\rm obs}  \circ \hat{M}_{\rm vec} \circ \Omega \bm a$
and  
$\bm \nu^{(2)} = \hat{\bm s}^{\rm obs}  \circ \hat{M}_{\rm vec} \circ \Omega \bm b$, the result of Lemma \ref{lemma:cZ_2} is straightforward
by solving the above system of linear inequalities.


\subsection{More details of Algorithm \ref{alg:paraOptAlign}} \label{appendix:details_algorithm_paraOptAlign}

The algorithm is initialized at the optimal alignment matrix
for $z_1 = -\infty$, which can be easily identified based on the coefficients of the QFs. 
At step
$t, t \in [\cT],$
the task is to find the next breakpoint
$z_{t + 1}$
and the next optimal alignment matrix
$\hat{M}_{t + 1}$.
This task can be done by finding the smallest $z_{t + 1}$
such that
$z_{t + 1} > z_t$
among the intersections of the current QF
$L_{n, m} \big (\hat{M}_t, z \big )$
and
each of the other QFs
$L_{n, m} (M, z)$
for
$M \in \cM_{n, m} \setminus \big \{\hat{M}_t \big \}$. 
This step is repeated until we find the optimal alignment matrix when $z_t = +\infty$.
%
The algorithm returns the sequences of the optimal alignment matrices $ \{\hat{M}_t  \}_{t=1}^{\cT - 1}$ and breakpoints $ \{z_t  \}_{t=1}^{\cT}$. 
The entire path of optimal alignment matrices for $z \in \RR$ 
is given by
\begin{align*}
 \hat{M}_{n, m}(z)
 =
 \mycase{
 \hat{M}_1 & \text{ if } z \in (z_1 = -\infty, z_2], \\
 \hat{M}_2 & \text{ if } z \in [z_2, z_3], \\
 ~~ \vdots & \\
 \hat{M}_{\cT - 1} & \text{ if } z \in [z_{\cT - 1}, z_{\cT} = +\infty).
 }
\end{align*}

At Line 2 of the Algorithm \ref{alg:paraOptAlign}, the optimal alignment matrix $\hat{M}_t$ at $z_t = - \infty$ is identified as follows.
For each $M \in \cM_{n, m}$,  the corresponding loss function is written as a positive definite quadratic function.
Therefore, at $z_t = - \infty$, the optimal alignment matrix is the one whose corresponding loss function $L_{n, m} (M, z_t)$ has the smallest coefficient of the quadratic term.
If there are more than one quadratic function having the same smallest quadratic coefficient, we then choose the one that has the largest coefficient in the linear term. If those quadratic functions still have the same largest linear coefficient, we finally choose the one that has the smallest constant term.
At Line 4 of the Algorithm \ref{alg:paraOptAlign}, since both $L_{n, m}(\hat{M}_t, z_{t+1})$ and $L_{n, m}(\hat{M}_{t + 1}, z_{t+1})$ are quadratic functions of $z_{t + 1}$, we can compute $z_{t + 1}$ by simply solving a quadratic equation.


\subsection{Standard DTW (for a single value of $z$)} \label{appendix:review_standard_DTW}

In the standard DTW with $n$ and $m$, we use $n \times m$ table 
whose
$(i, j)^{\rm th}$ element contains $\hat{M}_{i, j}(z)$ that is 
the optimal alignment matrix for the sub-sequences
$\bm X(z)_{1:i}$ and $\bm Y(z)_{1:j}$. 
The optimal alignment matrix $\hat{M}_{i, j}(z)$ for each of the sub-problem with $i$ and $j$ can be used for efficiently computing the optimal alignment matrix $\hat{M}_{n, m}(z)$ for the original problem with $n$ and $m$. 
It is well-known that the following equation, which is often called \emph{Bellman equation}, holds:
\begin{align} \label{eq:bellman_standard_cost}
	c_{ij}(z) &= \big (\bm X_i(z) - \bm Y_j(z) \big)^2 \nonumber \\ 
	\hat{L}_{i, j} (z) &= c_{ij}(z) + \min \left \{ \hat{L}_{i - 1, j} (z), ~ \hat{L}_{i, j - 1} (z), ~ \hat{L}_{i - 1, j - 1} (z)\right \}.
\end{align}
Equivalently, we have 
\begin{align} \label{eq:bellman_standard_opt_matrix}
	\hat{M}_{i, j}(z) = \argmin \limits_{M \in \tilde{\cM}_{i, j}} L_{i, j} \big (M, z \big ),
\end{align}
where 
\begin{align*}
	\tilde{\cM}_{i, j} = 
	\left \{ 
	\begin{array}{l}
		{\rm vstack} \left (\hat{M}_{i - 1, j}(z), ~ (0, ...,0, 1) \right ) \in \RR^{i \times j}, \\ 
		{\rm hstack} \left (\hat{M}_{i, j - 1}(z), ~ (0, ...,0, 1)^\top \right ) \in \RR^{i \times j}\\ 
		\begin{pmatrix}
			\hat{M}_{i - 1, j - 1}(z) & ~ 0 \\ 
			0 & ~ 1 \\ 
		\end{pmatrix}
		\in \RR^{i \times j}
	\end{array}
	\right \}, 
\end{align*}
$
i \in [n] = \{1, 2, ..., n\}, j \in [m],
$
$
\hat{M}_{0, 0} (z) = \hat{M}_{i - 1, j - 1} (z) = \emptyset
$
when $i = j = 1$,
$\hat{M}_{0, j} (z) = \emptyset$ for any $j \in [m]$,
$\hat{M}_{i, 0} (z) = \emptyset$ for any $i \in [n]$,
$\rm vstack(\cdot, \cdot)$ and $\rm hstack(\cdot, \cdot)$ are vertical stack and horizontal stack operations, respectively.
The Bellman equation \eq{eq:bellman_standard_opt_matrix} enables us to efficiently compute the optimal alignment matrix for the problem with $n$ and $m$ by using the optimal alignment matrices of its sub-problems.

\subsection{Details for Experiments} \label{appendix:details_experiments}

We executed the code on Intel(R) Xeon(R) CPU E5-2687W v4 @ 3.00GHz.

\paragraph{Methods for Comparison.} 

We compared our SI-DTW method with the following approaches:

\begin{itemize}
	\item SI-DTW-oc: this is our first idea of introducing conditional SI for time-series similarity using the DTW by additionally conditioning on all the operations of the DTW algorithm itself to make the problem tractable.
	Then, since the selection event of SI-DTW-oc is simply represented as a single polytope in the data space, we can apply the method in the seminal conditional SI paper \cite{lee2016exact} to compute the over-conditioning $p$-value.
	The details are shown in Appendix \ref{appendix:si_dtw_oc}.
	However, such an over-conditioning leads to a loss of statistical power \cite{lee2016exact, fithian2014optimal}.
	Later, this drawback was removed by the SI-DTW method in this paper.
	
	\item Data splitting (DS): an approach that divides the dataset in half based on even and odd indices, and uses one for computing the DTW distance and the other for inference.
	
	\item Naive: this method uses the classical $z$-test to calculate the naive $p$-value, i.e.,
	\begin{align*} 
	p_{\rm naive}  = 
	\bP_{\rm H_0} 
	\left ( 
	\bm \eta_{\hat{M}, \hat{\bm s}}^\top {\bm X \choose \bm Y} \geq 
	\bm \eta_{\hat{M}, \hat{\bm s}}^\top {\bm X^{\rm obs} \choose \bm Y^{\rm obs}}
	\right ).
\end{align*}
The naive $p$-value is computed by (wrongly) assuming that $\bm \eta_{\hat{M}, \hat{\bm s}}$ does not depend on the data.
\end{itemize}

\paragraph{Experiments on CI Length.} 

The results on CI length are shown in Fig. \ref{fig:ci_length}.

\begin{figure}[!t]
\begin{subfigure}{.245\linewidth}
  \centering
  \includegraphics[width=\linewidth]{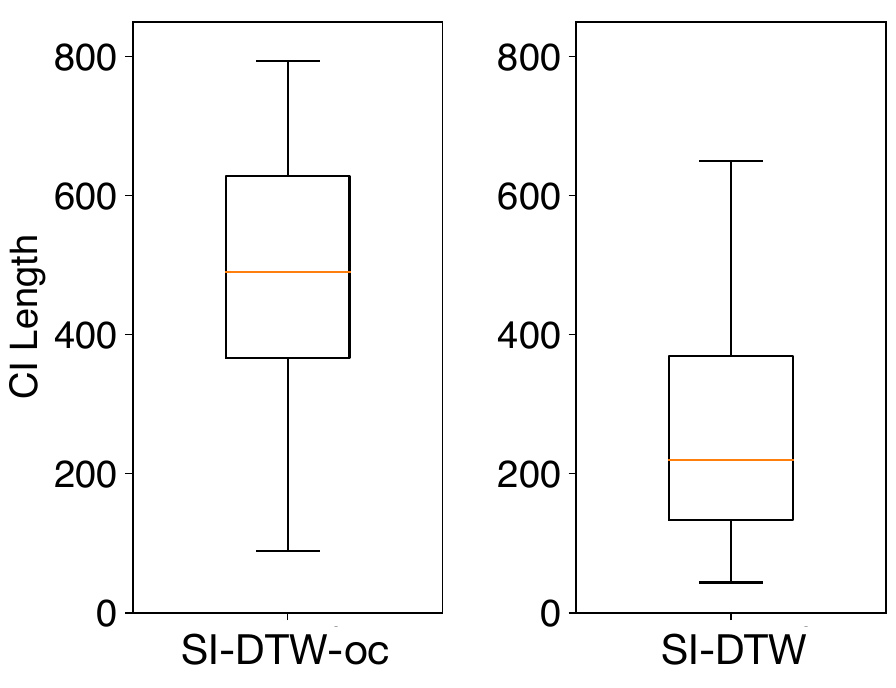}  
  \caption{$\Delta = 2$}
\end{subfigure}
\begin{subfigure}{.245\linewidth}
  \centering
  \includegraphics[width=\linewidth]{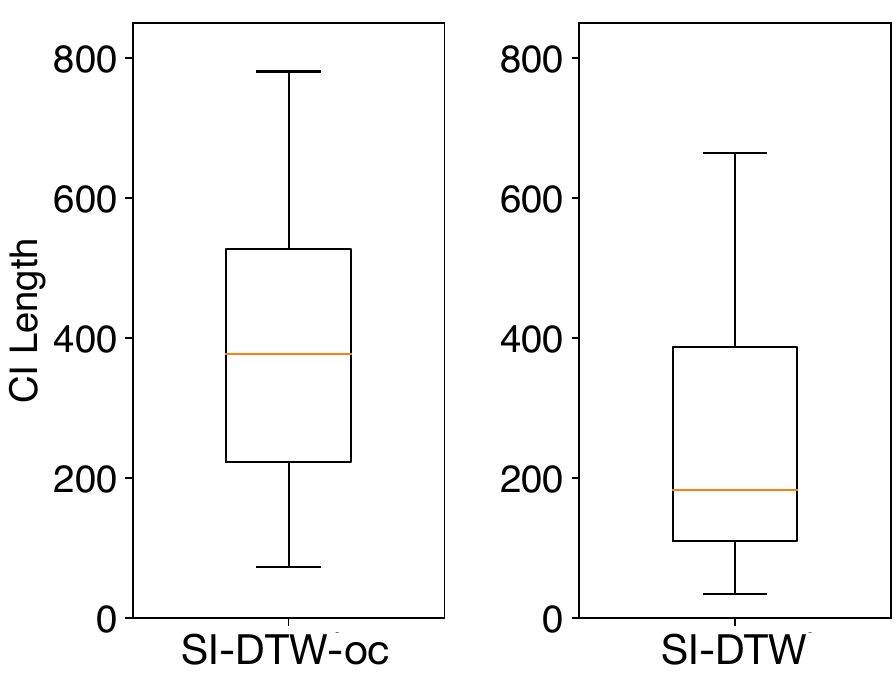} 
  \caption{$\Delta = 3$}
\end{subfigure}
\begin{subfigure}{.245\linewidth}
  \centering
  \includegraphics[width=\linewidth]{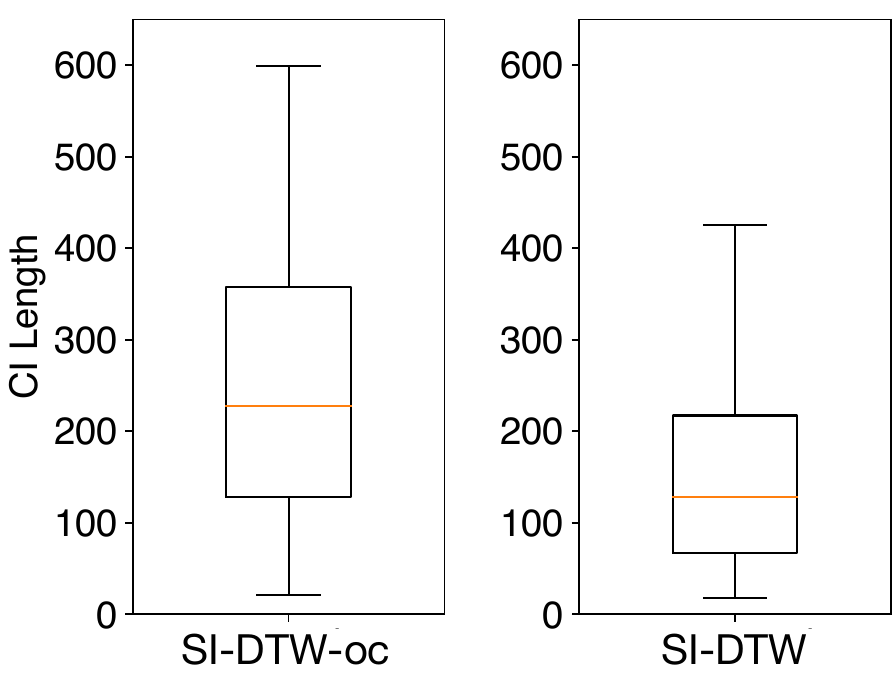}  
  \caption{$\Delta = 4$}
\end{subfigure}
\begin{subfigure}{.245\linewidth}
  \centering
  \includegraphics[width=\linewidth]{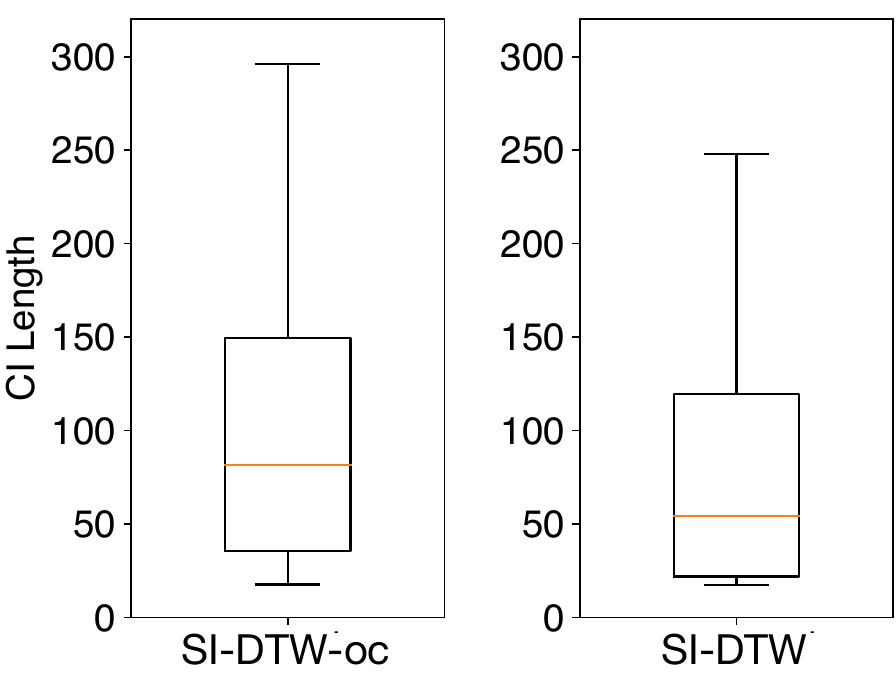} 
  \caption{$\Delta = 5$}
\end{subfigure}
\caption{CI length comparison.} 
\label{fig:ci_length}
\end{figure}

\paragraph{Experiments on Computational Time and Robustness.} 

Regarding the computational time experiments, we set $n = 20$, $\Delta = 2$, and ran 10 trials for each $m \in \{ 20, 40, 60, 80\}$.
In regard to the robustness experiments, the setups were similar to the FPR experiments and we considered the following cases:

$\bullet$ Non-normal noise: the noises $\bm \veps_{\bm X}$ and $\bm \veps_{\bm Y}$ following Laplace distribution, skew normal distribution (skewness coefficient: 10), and $t_{20}$ distribution.

$\bullet$ Unknown variance: the variances of the noises were estimated from the data.

The results on computational time are shown in Fig. \ref{fig:cc}.
The results on robustness are shown in Fig. \ref{fig:robustness_1} and Fig. \ref{fig:robustness_2}.
Our method still maintains good performance on FPR control and CI coverage guarantee.

\begin{figure}[!t]
  \centering
  \includegraphics[width=.5\linewidth]{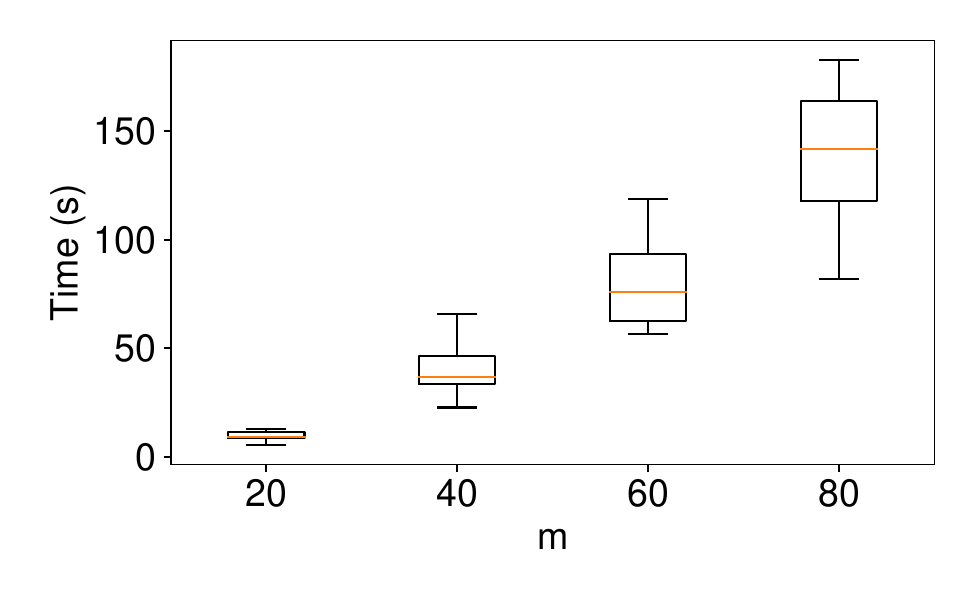}  
\caption{Computational time.}
\label{fig:cc}
\end{figure}

\begin{figure}[!t]
\begin{subfigure}{.245\textwidth}
  \centering
  \includegraphics[width=\linewidth]{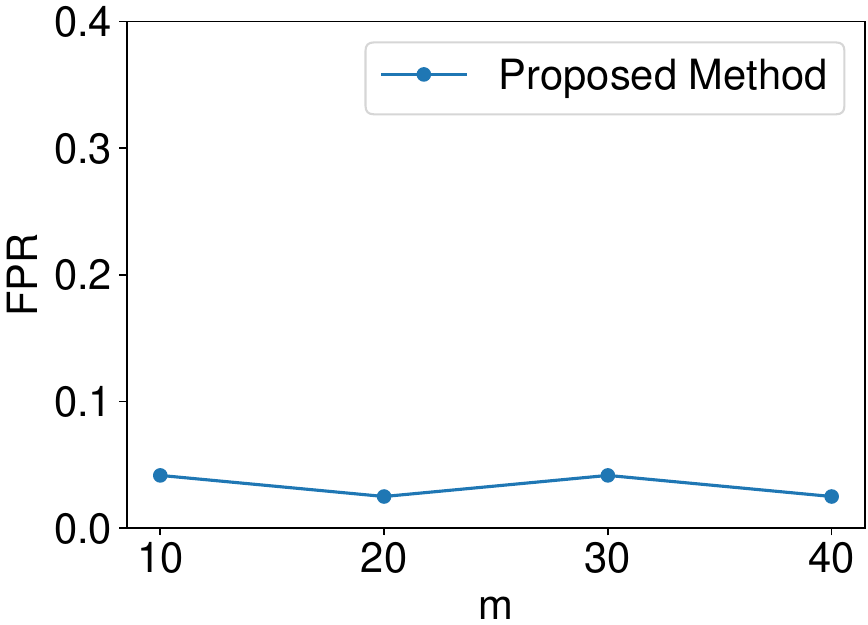}  
  \caption{Laplace distribution}
\end{subfigure}
\begin{subfigure}{.245\textwidth}
  \centering
  \includegraphics[width=\linewidth]{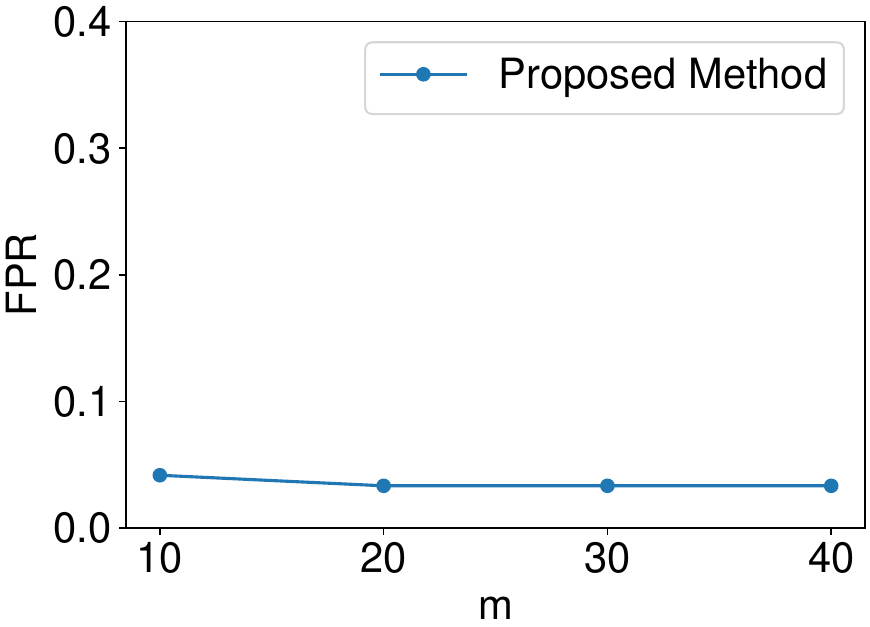}  
  \caption{Skew normal distribution}
\end{subfigure}
\begin{subfigure}{.245\textwidth}
  \centering
  \includegraphics[width=\linewidth]{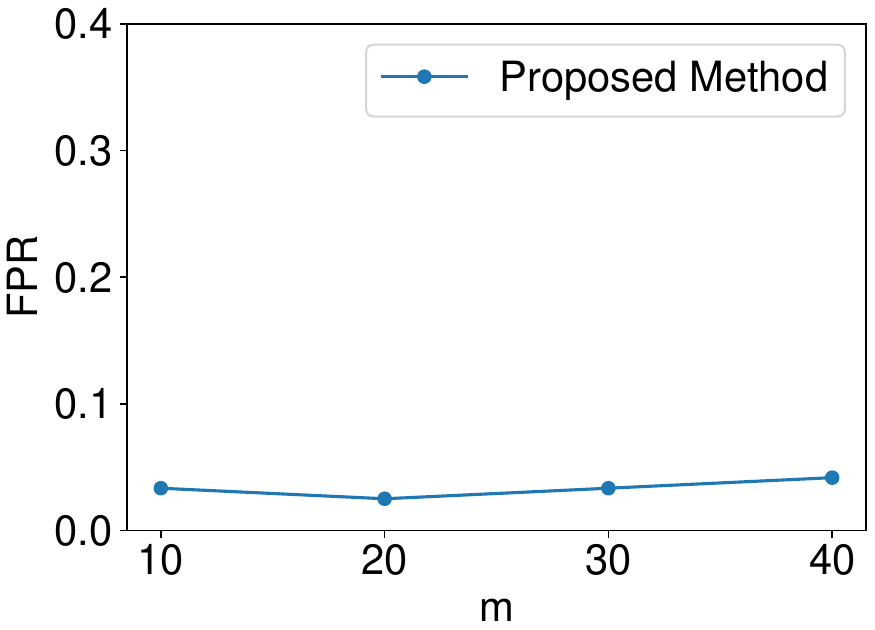}  
  \caption{$t_{20}$ distribution}
\end{subfigure}
\begin{subfigure}{.245\textwidth}
  \centering
  \includegraphics[width=\linewidth]{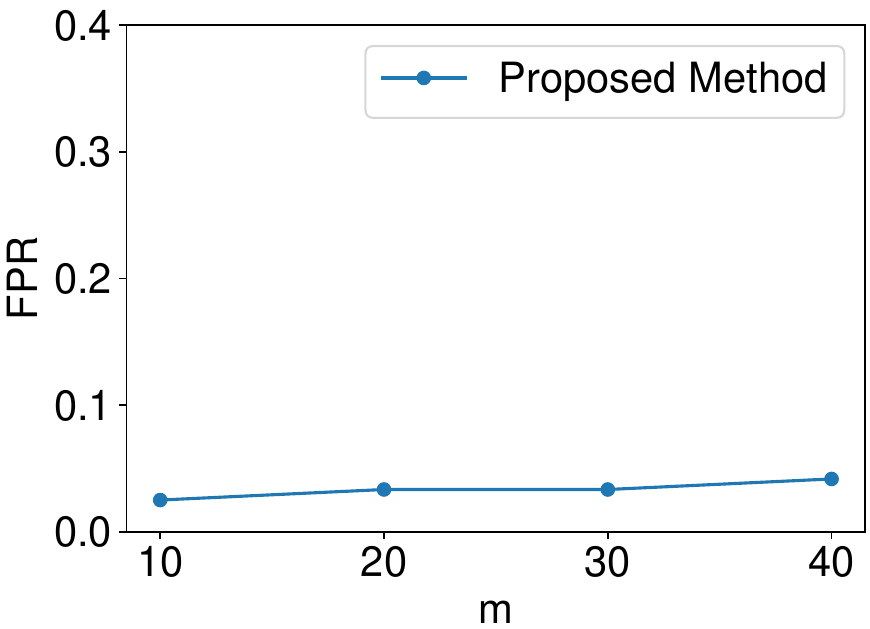}  
  \caption{Estimated variance}
\end{subfigure}
\caption{ The robustness of the proposed method in terms of the FPR control.}
\label{fig:robustness_1}
\end{figure}

\begin{figure}[!t]
\begin{subfigure}{.245\textwidth}
  \centering
  \includegraphics[width=\linewidth]{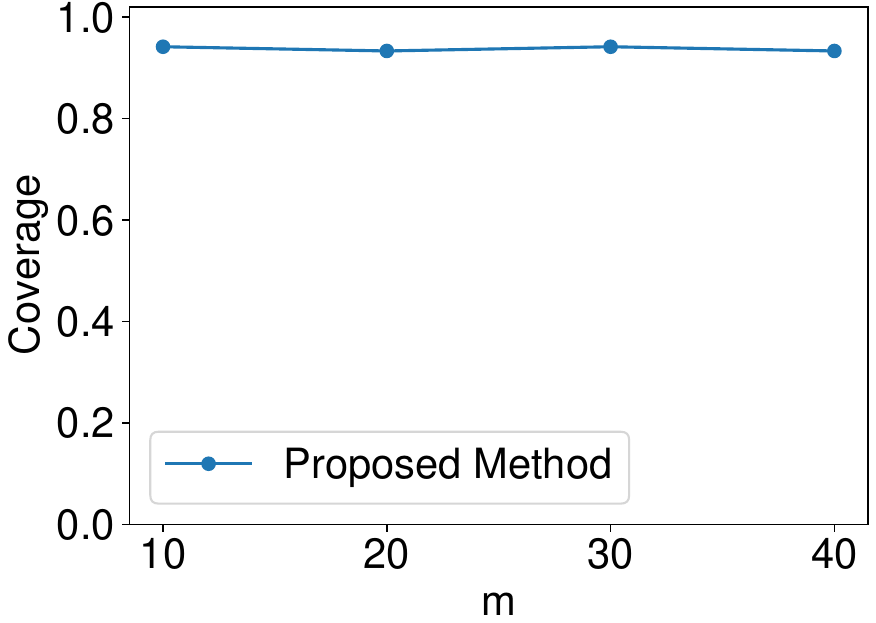}  
  \caption{Laplace distribution}
\end{subfigure}
\begin{subfigure}{.245\textwidth}
  \centering
  \includegraphics[width=\linewidth]{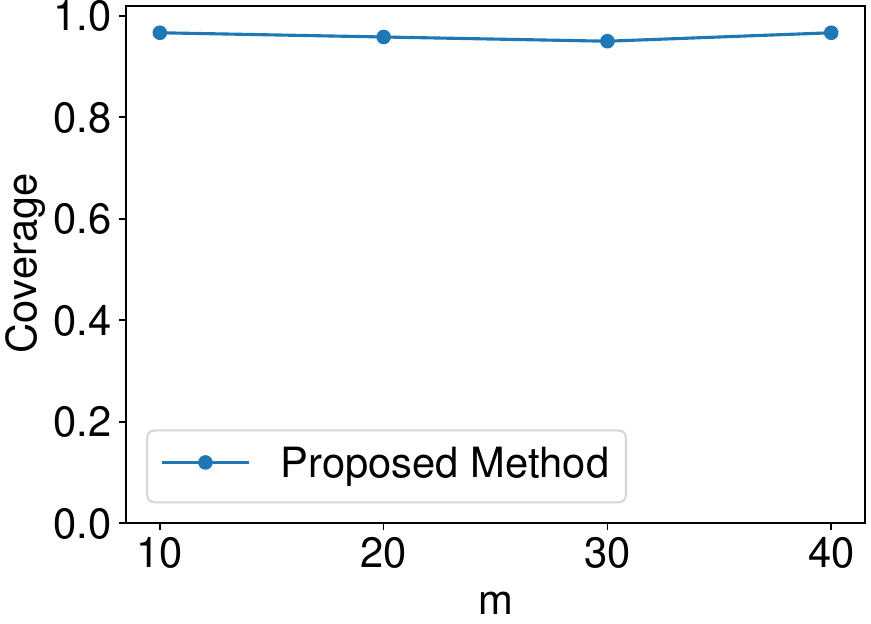}  
  \caption{Skew normal distribution}
\end{subfigure}
\begin{subfigure}{.245\textwidth}
  \centering
  \includegraphics[width=\linewidth]{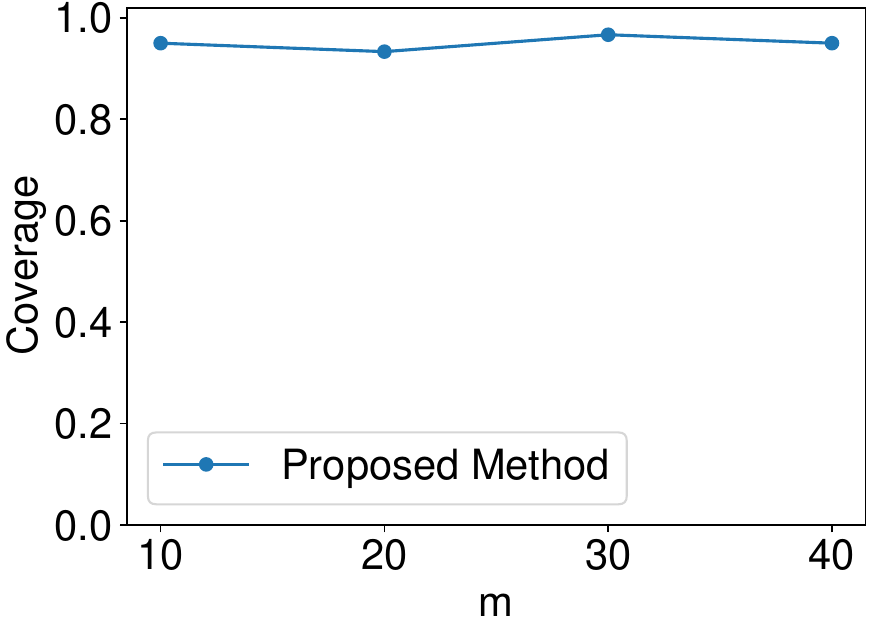}  
  \caption{$t_{20}$ distribution}
\end{subfigure}
\begin{subfigure}{.245\textwidth}
  \centering
  \includegraphics[width=\linewidth]{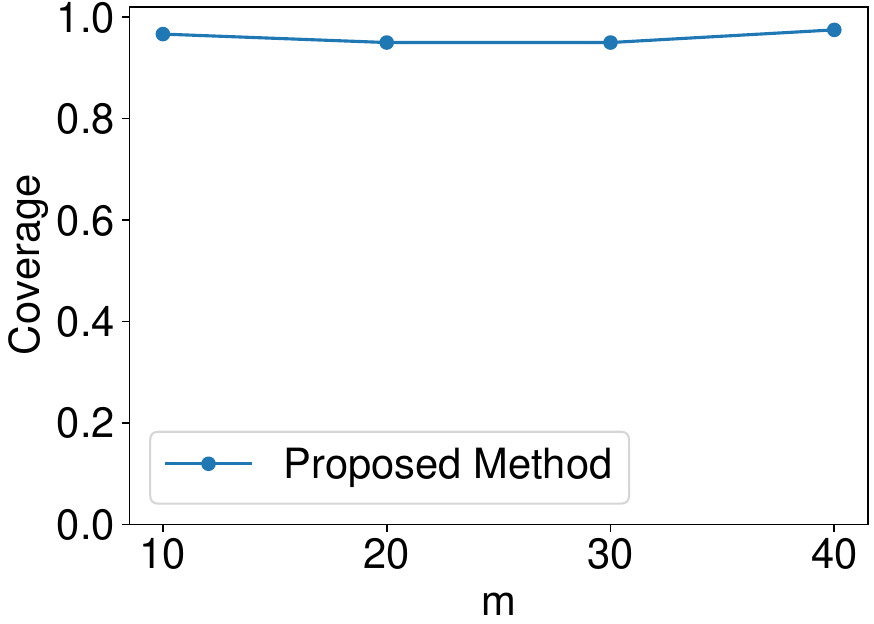}  
  \caption{Estimated variance}
\end{subfigure}
\caption{ The robustness of the proposed method in terms of the CI coverage guarantee.}
\label{fig:robustness_2}
\end{figure}


\subsection{Details on Real-data Experiments} \label{appendix:details_real_data}

In the first problem setting, we consider a two-class classification problem for heart-beat signals where the signals were generated by a data generator tool called 
NeuroKit2 \cite{Makowski2021neurokit}.
In the second setting, we used six real datasets that are available at UCR Time Series Classification Repository and UCI Machine Learning Repository: 
Italy Power Demand (Class $\tt{C1}$: days from Oct to March, Class $\tt{C2}$: days from April to September), 
Melbourne Pedestrian (Class $\tt{C1}$: Bourke Street Mall, Class $\tt{C2}$: Southern Cross Station),
Smooth Subspace (Class $\tt{C1}$: smooth subspace spanning  from time stamp 1 to 5, Class $\tt{C2}$: smooth subspace spanning  from time stamp 11 to 15),
EEG Eye State (Class $\tt{C1}$: eye-open, Class $\tt{C2}$:  eye-closed),
China Town (Class $\tt{C1}$: weekdays, Class $\tt{C2}$:  weekends),
and
Finger Movement (Class $\tt{C1}$: left, Class $\tt{C2}$: right).
These datasets are taken from various application domains and commonly used as the benchmark datasets in time-series analysis.

\subsection{Derivation of the SI-DTW-oc method}\label{appendix:si_dtw_oc}

This is our first idea of introducing conditional SI for time series similarity using DTW by additionally conditioning on all the operations of the DTW algorithm itself to make the problem tractable.
Then, since the selection event of SI-DTW-oc is simply represented as a single polytope in the data space, we can apply the method in the seminal conditional SI paper \cite{lee2016exact} to compute the over-conditioning $p$-value.
However, such an over-conditioning leads to a loss of statistical power \cite{lee2016exact, fithian2014optimal}, i.e., low TPR. 

\textbf{Notation.}
We denote $\cD^{\rm oc}$ as the over-conditioning data space in SI-DTW-oc.
The difference between $\cD$ in \eq{eq:conditional_data_space} and $\cD^{\rm oc}$ is that the latter is characterized with additional constraints on all the operations of the DTW algorithm.
For two time series with lengths $i \in [n]$ and $j \in [m]$, a set of all possible alignment matrices is defined as $\cM_{i, j}$.
Given $\bm X \in \RR^n$ and $\bm Y \in \RR^m$, the loss between theirs sub-sequence $\bm X_{1:i}$ and $\bm Y_{1:j}$ with $M \in \cM_{i, j}$ is written as 
\begin{align*}
	L_{i, j} (\bm X, \bm Y, M) = \Big \langle
		M, C\big (\bm X_{1:i}, \bm Y_{1:j} \big )
	\Big \rangle 
\end{align*}
Then, the DTW distance and the optimal alignment matrix between $\bm X_{1:i}$ and $\bm Y_{1:j}$ are respectively written as
\begin{align*}
	\hat{L}_{i, j} (\bm X, \bm Y) &= \min \limits_{M \in \cM_{i, j}} ~ L_{i, j} (\bm X, \bm Y, M) \\ 
	\hat{M}_{i, j} (\bm X, \bm Y) &= \argmin \limits_{M \in \cM_{i, j}} ~ L_{i, j} (\bm X, \bm Y, M).
\end{align*}

\textbf{Characterization of the over-conditioning conditional data space $\cD^{\rm oc}$.} Since the inference is conducted with additional conditions on all steps of the DTW, the conditional data space $\cD^{\rm oc}$ is written as 
\begin{align} \label{eq:oc_conditional_data_space}
\cD^{\rm oc} = 
\left \{ 
	{\bm X \choose \bm Y} 
	 \mid 
	\begin{array}{l}
	\bigcap \limits_{i = 1}^n
	\bigcap \limits_{j = 1}^m
	\hat{M}_{i, j}(\bm X, \bm Y) = \hat{M}_{i, j}^{\rm obs}, \\ 
	\cS(\bm X, \bm Y) = \hat{\bm s}^{\rm obs}, 
	~ \cQ(\bm X, \bm Y) = \hat{\bm q}^{\rm obs}
	\end{array}  
\right \},
\end{align}
where $\hat{M}_{i, j}^{\rm obs} = \hat{M}_{i, j}(\bm X^{\rm obs}, \bm Y^{\rm obs})$.
The characterization of the third condition $\cQ(\bm X, \bm Y) = \hat{\bm q}^{\rm obs}$ is a line in the data space as presented in Lemma \ref{lemma:data_line}.
The characterization of the second condition $\cS(\bm X, \bm Y) = \hat{\bm s}^{\rm obs}$ is the same as Lemma \ref{lemma:cZ_2}.
Therefore, the remaining task is to characterize the region in which the data satisfies the first condition.

For each value of $i \in [n]$ and $j \in [m]$, $\hat{M}_{i, j}(\bm X, \bm Y) = \hat{M}_{i, j}^{\rm obs}$ if and only if 
\begin{align}
	\min \limits_{M \in \cM_{i, j}} ~ L_{i, j} (\bm X, \bm Y, M) &=  L_{i, j} (\bm X^{\rm obs}, \bm Y^{\rm obs}, M_{i, j}^{\rm obs}) \\ 
	\Leftrightarrow \quad \quad \quad \quad \quad 
	\hat{L}_{i, j}(\bm X, \bm Y) &=  L_{i, j} (\bm X^{\rm obs}, \bm Y^{\rm obs}, M_{i, j}^{\rm obs}). \label{eq:app_1}
\end{align}
Based on the recursive structure of DTW, we have 
\begin{align}
	\hat{L}_{i, j}(\bm X, \bm Y) = C_{ij}(\bm X, \bm Y) + \min \left \{
	 \begin{array}{l}
	 \hat{L}_{i - 1, j} (\bm X, \bm Y), \\ 
	 \hat{L}_{i, j - 1} (\bm X, \bm Y), \\ 
	 \hat{L}_{i - 1, j - 1} (\bm X, \bm Y)
	 \end{array}
	 \right \}. \label{eq:app_2}
\end{align}
Combining \eq{eq:app_1} and \eq{eq:app_2}, we have the following inequalities 
\begin{align} \label{eq:app_3}
\begin{aligned}
	L_{i, j} (\bm X^{\rm obs}, \bm Y^{\rm obs}, M_{i, j}^{\rm obs}) &\leq C_{ij}(\bm X, \bm Y) + \hat{L}_{i - 1, j} (\bm X, \bm Y), \\ 
	L_{i, j} (\bm X^{\rm obs}, \bm Y^{\rm obs}, M_{i, j}^{\rm obs}) &\leq C_{ij}(\bm X, \bm Y) + \hat{L}_{i, j - 1} (\bm X, \bm Y), \\
	L_{i, j} (\bm X^{\rm obs}, \bm Y^{\rm obs}, M_{i, j}^{\rm obs}) &\leq C_{ij}(\bm X, \bm Y) + \hat{L}_{i - 1, j - 1} (\bm X, \bm Y).
\end{aligned}
\end{align}
Since the loss function is in the quadratic form, \eq{eq:app_3} can be easily written in the form of
\begin{align*}
	(\bm X ~ \bm Y)^\top A_{i, j}^{(1)} (\bm X ~ \bm Y) \leq 0, \\ 
	(\bm X ~ \bm Y)^\top A_{i, j}^{(2)} (\bm X ~ \bm Y) \leq 0, \\ 
	(\bm X ~ \bm Y)^\top A_{i, j}^{(3)} (\bm X ~ \bm Y) \leq 0.
\end{align*}
where the matrices $A_{i, j}^{(1)}$, $A_{i, j}^{(2)}$ and $A_{i, j}^{(3)}$ depend on $i$ and $j$.
It suggests that the conditional data space in \eq{eq:oc_conditional_data_space} can be finally characterized as
\begin{align*}
\cD^{\rm oc} = 
\left \{ 
	{\bm X \choose \bm Y} 
	 \mid 
	\begin{array}{l}
	\bigcap \limits_{i = 1}^n
	\bigcap \limits_{j = 1}^m
	\bigcap \limits_{k = 1}^3
	(\bm X ~ \bm Y)^\top A_{i, j}^{(k)} (\bm X ~ \bm Y) \leq 0 , \\ 
	\cS(\bm X, \bm Y) = \hat{\bm s}^{\rm obs}, 
	~ \cQ(\bm X, \bm Y) = \hat{\bm q}^{\rm obs}
	\end{array}  
\right \}.
\end{align*}
Now that the conditional data space $\cD^{\rm oc}$ is identified, we can easily compute the truncation region and calculate the over-conditioning selective $p$-value.

\end{document}